\newenvironment{proof}{{\noindent\bf Proof}\quad}{\hfill $\square$\par}
\newtheorem{lemma}{Lemma}
\newtheorem{theorem}{Theorem}
\newtheorem{definition}{Definition}
\newtheorem{example}{Example}
\newtheorem{problem}{Problem}
\journal{Journal of \LaTeX\ Templates}
\begin{document}

\begin{frontmatter}

\title{Influence of Binomial Crossover on Approximation Error of Evolutionary Algorithms}



\author[WUT]{Cong Wang}
\ead{1291092964@qq.com}

\author[TU]{Jun He}
\ead{jun.he@ntu.ac.uk}

\author[WUT]{Yu Chen\corref{mycorrespondingauthor}}
\cortext[mycorrespondingauthor]{Corresponding author.}
\ead{ychen@whut.edu.cn}

\author[WHU1,WHU2]{Xiufen Zou}
\ead{xfzou@whu.edu.cn}
\address[WUT]{School of Science, Wuhan University of Technology, Wuhan, 430070, China}
\address[TU]{Department of Computer Science, Nottingham Trent University, Clifton Campus, Nottingham, NG11 8NS, UK}
\address[WHU1]{School of Mathematics and Statistics, Wuhan University, Wuhan, 430072, China}
\address[WHU2]{Computational Science Hubei Key Laboratory, Wuhan University, Wuhan, 430072, China}

\begin{abstract}
Although differential evolution (DE) algorithms perform well on a large variety of complicated optimization problems, only a few theoretical studies are focused on the working principal of DE algorithms. To make the first attempt to reveal the function of binomial crossover, this paper aims to answer whether it can reduce the approximation error of evolutionary algorithms. By investigating the expected approximation error and the probability of not finding the optimum, we conduct a case study of comparing two evolutionary algorithms with and without binomial crossover on two classical benchmark problems: OneMax and Deceptive. It is proven that using binomial crossover leads to dominance of transition matrices. As a result, the algorithm with binomial crossover asymptotically outperforms that without crossover on both OneMax and Deceptive, and outperforms on OneMax, however, not on Deceptive. Furthermore, an adaptive parameter strategy is proposed which can strengthen superiority of binomial crossover on Deceptive.
\end{abstract}

\begin{keyword}
evolutionary computations \sep performance analysis \sep binomial crossover \sep  approximation error \sep differential evolution
\end{keyword}

\end{frontmatter}

\section{Introduction}

Evolutionary algorithms (EAs) are a  big family of randomized search heuristics  inspired from biological evolution. 
Many evolutionary algorithms (EAs) such as genetic algorithm (GA) and evolutionary strategies (ES) use a crossover operator besides mutation and selection. Many empirical studies demonstrate that crossover which combines genes of two parents to generate new offspring is helpful to convergence of   EAs. Theoretical results on runtime analysis validate the promising function of crossover in EAs~\citep{jansen2002analysis,kotzing2011crossover,corus2017standard,dang2017escaping,sudholt2017crossover,pinto2018simple, oliveto2020tight,lengler2020large,lehre2008crossover,doerr2012crossover,doerr2013more,sutton2021fixed}, whereas there are also some cases that crossover cannot be helpful~\citep{richter2008ignoble,antipov2021effect}.


Differential evolution (DE) algorithms implement crossover operations in a different way. By exchanging components of target vectors with donor vectors, continuous DE algorithms achieve competitive performance on a large variety of complicated problems~\citep{Das2011,DAS20161,SEPESYMAUCEC2019100428,BILAL2020103479}, and its competitiveness is to great extent attributed to the employed crossover operations~\citep{Lin2011}.
Besides the theoretical studies on continuous DE \citep{OPARA2019546} as well as the runtime analysis to reveal working principle of BDE~\citep{DOERR2020110}, there were no theoretical results reported on how crossover influences performance of discrete-coded DE algorithms.

Nevertheless, there is a gap between runtime analysis and practice. Since EAs belong to randomized search, their optimization time to reach an optimum is uncertain and could be even infinite in continuous optimization~\citep{CHEN2021200}. Due to this reason, optimization time is seldom used for evaluating the performance of EAs in computer simulation.  While  EAs stop after running finite generations, their performance is evaluated  by  solution quality such as the  mean and median of the fitness value or approximation error~\citep{xu2020helper}. In theory,  solution quality can be measured for given iteration budget by  the expected fitness value~\citep{JANSEN201439} or  approximation error~\citep{he2016analytic,WANG2021}, which contributes to the analysis framework named as fixed budget analysis. A fixed budget analysis on immune-inspired hypermutations leads to theoretical results that are very different from those of runtime analysis but consistent to the empirical results, which further demonstrates that the perspective of fixed budget computations
provides valuable information and additional insights for performance of randomized search heuristics~\citep{Jasen2014TEVC}.

 In this paper, solution quality of an EA after running finite generations is measured by two metrics: the expected value of the approximation error and the error tail probability.   The former measures  solution quality, that is the fitness gap  between a solution and optimum. The latter is the probability distribution of the error over error levels, which measures the probability of finding the optimum.
 An EA is said to outperform   another if for the former EA, its error and tail probability are smaller. Furthermore, an EA is said to asymptotically outperform   another if for the former EA, its error and tail probability are smaller after a sufficiently large number of generations.

The research question of this paper is whether the binomial crossover operator can help reduce the approximation error. As a pioneering work on this topic, we investigate a $(1+1)EA_C$ that performs the binomial crossover on an individual and an offspring generated by mutation, and compare a $(1+1) EA$ without crossover and its variant  $(1+1)EA_C$ on two classical problems, OneMax and Deceptive. By splitting the objective space into error levels, the analysis is performed based on the Markov chain models~\citep{he2003towards,He2016}. Given  the two EAs, the comparison of their performance are drawn from the comparison of their transition  probabilities, which are estimated by investigating the bits preferred to by evolutionary operations. Under some conditions, $(1+1)EA_C$ with binomial crossover   outperforms $(1+1) EA$ on OneMax, but not on Deceptive;  however, by adding an adaptive parameter mechanism arising from theoretical results, $(1+1)EA_C$ with binomial crossover outperforms  $(1+1) EA$   on Deceptive too.

The rest of this paper is organized as follows. Section \ref{SecRelWork} reviews related theoretical work.  Preliminary contents for our theoretical analysis are presented in Section \ref{SecPre}. Then, the influence of the binomial crossover on transition probabilities is investigated in Section \ref{SecTrans}. Section \ref{SecAsy} conducts an analysis of the asymptotic performance of EAs. To reveal how  binomial crossover works on the performance of EAs for consecutive iterations, the OneMax problem and the Deceptive problem are investigated in Sections \ref{SecExpi} and \ref{SecExpr}, respectively. Finally, Section \ref{SecCon} presents the conclusions and discussions.

\section{Related Work}\label{SecRelWork}

\subsection{Theoretical Analysis of Crossover}

To understand how  crossover influences the performance of EAs, Jansen \emph{et al.} \cite{jansen2002analysis} proved that an EA using crossover can reduce the expected optimization time from super-polynomial to a polynomial of small degree on the function Jump. K\"{o}tzing \emph{et al.} 
\cite{kotzing2011crossover} investigated   crossover-based EAs on the  functions OneMax and Jump and showed  the potential speedup by crossover when combined with a fitness-invariant bit shuffling operator in terms of optimization time.  For a simple GA without shuffling, they found that the crossover probability has a drastic impact on the performance on Jump. Corus and Oliveto
\cite{corus2017standard}   rigorously obtained an upper bound
on the runtime of standard steady state GAs to hillclimb the OneMax function and proved that the steady-state
EAs are 25\% faster than their mutation-only counterparts. Their analysis also suggests
that larger populations may be faster than populations of size 2. Dang \emph{et al.} \cite{dang2017escaping} revealed that the interplay between crossover and mutation may result in a sudden burst of diversity on the Jump test function and reduce the expected optimization time  compared to mutation-only algorithms like the (1+1) EA.
For royal road functions and OneMax, Sudholt \cite{sudholt2017crossover} analyzed uniform crossover and k-point crossover and proved that crossover makes every $(\mu+\lambda)$ EA at least twice as fast as the fastest EA using only standard bit mutation. Pinto and Doerr
\cite{pinto2018simple} provided a simple proof of a crossover-based genetic algorithm (GA) outperforming any mutation-based black-box heuristic on the classic benchmark OneMax. Oliveto  \emph{et al.} \cite{oliveto2020tight} obtained a tight lower bound on the expected runtime of the (2 + 1) GA on OneMax. Lengler and Meier
\cite{lengler2020large} studied the positive effect of using larger population sizes and crossover on Dynamic BinVal.

Besides   artificial benchmark functions, several theoretical studies also show that crossover is   helpful on  non-artificial problems.
Lehre and Yao \cite{lehre2008crossover}  proved that the use of crossover in the $(\mu+1)$ Steady State Genetic Algorithm may reduce the runtime from exponential to polynomial for some instance classes of the problem of computing unique input–output (UIO) sequences. Doerr  \emph{et al.} 
\cite{doerr2012crossover,doerr2013more}  analyzed  EAs on  the all-pairs shortest path problem. Their results confirmed that the EA with a crossover operator is significantly faster  than without in  terms of the expected optimization time. Sutton 
\cite{sutton2021fixed} investigated the effect of crossover on the closest string problem and proved that a multi-start $(\mu+1)$ GA  required less randomized fixed-parameter tractable (FPT) time than that with disabled crossover.


However, there is some evidence that crossover is not always helpful. Richter  \emph{et al.} \cite{richter2008ignoble}   constructed Ignoble Trail functions and proved that  mutation-based EAs optimize them more efficiently than GAs with crossover. The later need exponential optimization time. Antipov and Naumov 
\cite{antipov2021effect}
compared crossover-based algorithms on   RealJump functions with a slightly shifted optimum. The runtime of all considered algorithms  increases on  RealJump. The hybrid GA fails to find the shifted optimum with high probability.

\subsection{Theoretical Analysis of Differential Evolution Algorithms}

Although numerical investigations of DEs have been widely conducted, only a few theoretical studies paid attention to theoretical analysis, most of which are focused on continuous DEs~\citep{OPARA2019546}. By estimating the probability density function of generated individuals, Zhou  \emph{et al.} \cite{Zhou2016Analysis} demonstrated that the selection mechanism of DE, which chooses mutually different parents for generation of donor vectors, sometimes does not work positively on performance of DE.  Zaharie and Micota \cite{ZAHARIE20091126,zaharie2008statistical,Zaharie2017} investigated influence of the crossover rate on both the distribution of the number of mutated components
and the probability for a component to be taken from the mutant vector, as well as the influence of mutation and crossover on the diversity of intermediate population. Wang and Huang \cite{WANG20103263} attributed the DE to a one-dimensional stochastic model, and investigated how the probability distribution of population is connected to the mutation, selection and crossover operations of DE. Opara and Arabas \cite{OPARA201853} compared several variants of differential mutation using characteristics of their expected mutants’ distribution, which demonstrated that the classic mutation operators yield similar
search directions and differ primarily by the mutation range. Furthermore, they formalized the contour fitting notion and derived an analytical model that links the differential mutation operator with the adaptation of
the range and direction of search~\citep{OPARA2019100441}.

By investigating expected runtime of the binary differential evolution (BDE) proposed by Gong and Tuson \cite{Gong2007}, Doerr and Zhang \cite{DOERR2020110} performed a first fundamental analysis on the working principles of discrete-coded DE. It was shown that BDE optimizes the important decision variables, but is hard to find the optima for decision variables with small influence on the objective function. Since BDE generates trial vectors by implementing a binary variant of binomial crossover accompanied by the mutation operation, it has characteristics significantly different from classic EAs or estimation-of-distribution algorithms.

\subsection{Fixed Budget Analysis and Approximation Error}

To bridge the wide gap between theory and application, Jasen and Zarges \cite{JANSEN201439} proposed a fixed budget analysis (FBA) framework of randomized search heuristics (RSH), by which the fitness of random local search and (1+1)EA were investigated for given iteration budgets. Under the framework of FBA, they analyzed the any time performance of EAs and artificial immune systems on a proposed dynamic benchmark problem~\citep{Jansen2014}. Nallaperuma  \emph{et al.} \cite{Nallaperuma2017} considered the well-known traveling salesperson problem (TSP) and  derived the lower bounds of the expected fitness gain for a specified number of generations. Doerr  \emph{et al.} \cite{Doerr2013} built a bridge between runtime analysis and FBA, by which a huge body of work and a large collection of tools for the analysis of the expected optimization time could meet the new challenges introduced by the new fixed budget perspective. Based on the Markov chain model of RSH, Wang  \emph{et al.} \cite{WANG2021} constructed a general framework of FBA, by which they got the analytic expression of approximation error instead of asymptotic results of expected fitness values.

Considering that runtime analysis demonstrated that  hypermutations tend to be inferior
on typical example functions, Jansen and Zarges \cite{Jasen2014TEVC} conducted an FBA to explain why artificial immune systems are popular in spite
of these proven drawbacks. Although the single point mutation in random local search (RLS) outperforms than the inversely fitness-proportional mutation (IFPM)  and the somatic contiguous hypermutation (CHM) on OneMax in terms of expected optimization time, it was shown that  IFPM and CHM could be better while FBA is performed by considering  different starting points and varied iteration budgets. The results show that the traditional perspective of expected optimization time may be unable to
explain observed good performance that is due to limiting the length of runs.  Therefore, the perspective of fixed budget computations
provides valuable information and additional insights.

\section{Preliminaries}\label{SecPre}
\subsection{Problems}
Consider a maximization problem
\begin{equation*}
  \max f(\mathbf{x}),\quad\mathbf{x}=(x_1,\dots,x_n)\in\{0,1\}^n,
\end{equation*}
Denote its optimal solution by $\mathbf{x}^*$ and optimal objective value by  $f^*$.
The quality of a solution $\mathbf x$ is  evaluated by its approximation error $e(\mathbf x):=\mid f(\mathbf x)-f^*\mid$. The error $e(\mathbf x)$ takes finite values, called error levels:
\begin{equation*}
  e(\mathbf x)\in\{e_0,e_1,\dots,e_L\},\quad 0= e_0\le e_1\le\dots\le e_L,
\end{equation*}
where $L$ is an non-negative integer. $\mathbf{x}$ is called \emph{at the level $i$} if $e(\mathbf{x})=e_i$, $i\in\{0,1,\dots,L\}$. The collection of solutions at level $i$ is denoted by $\mathcal{X}_i$.

Two instances, the uni-modal OneMax problem and the multi-modal Deceptive problem, are considered in this paper.
\begin{problem}\label{OneM}(\textbf{OneMax})
\begin{equation*}
  \max f(\mathbf x)=\sum_{i=1}^nx_i,
\end{equation*}
where $\mathbf x=(x_1,\dots,x_n)\in \{0,1\}^n$.
\end{problem}
\begin{problem}\label{Dec}(\textbf{Deceptive})
\begin{equation*}
  \max f(\mathbf x)=\left\{\begin{aligned}& \sum_{i=1}^nx_i, && \mbox{if }\sum_{i=1}^nx_i>n-1,\\ &  n-1-\sum_{i=1}^nx_i, && \mbox{otherwise.} \end{aligned}\right.
\end{equation*}
where $\mathbf x=(x_1,\dots,x_n)\in \{0,1\}^n$.
\end{problem}
Both  OneMax and  Deceptive  can be represented in the form
\begin{equation}\label{OP1}
  \max f(\mid\mathbf{x}\mid),
\end{equation}
where $\mid\mathbf{x}\mid:=\sum_{i=1}^{n}x_i$. Error levels of (\ref{OP1}) take only $n+1$ values.

For the OneMax problem, both exploration and exploitation are helpful to convergence of EAs to the optimum, because exploration accelerates the convergence process and exploitation refines the precision of approximation solutions. However, for the Deceptive problem, local exploitation leads to convergence to the local optimum, but it in turn increases the difficulty to jump to the global optimum. That is, exploitation hinders convergence to the global optimum of the Deceptive problem, thus, the performance  of EAs are dominantly influenced by their exploration ability.

\subsection{Evolutionary Algorithms}

  \begin{algorithm}[ht]
  \caption{$(1+1)EA$}\label{Alg1}
  \begin{algorithmic}[1]
  \STATE counter $t=0$;
  \STATE randomly generate a solution $\mathbf x_0=(x_1,\dots,x_n)$;
  \WHILE{the stopping criterion is not satisfied}
  \FOR{$i=1,2,\dots,n$}
  \STATE   \begin{align}\label{Mut1}y_i=\left\{\begin{aligned}& 1-x_i, && \mbox{if } rnd_i<p_m,\\ & x_i, && \mbox{otherwise},\end{aligned}\right. \quad rnd_i\sim U[0,1];
  \end{align}
  \ENDFOR
  \IF{$f(\mathbf{y})\ge f(\mathbf{x}_t)$}
  \STATE $\mathbf{x}_{t+1}=\mathbf{y}$;
  \ENDIF
  \STATE $t=t+1$;
  \ENDWHILE
  \end{algorithmic}
  \end{algorithm}


For the sake of analysis on binomial crossover excluding influence of population, $(1+1)EA$ (presented by Algorithm \ref{Alg1}) without crossover is taken  as the baseline algorithm in our study. Its candidate solutions are generated by the bitwise mutation with probability $p_m$. Binomial crossover is added to $(1+1)EA$,  getting  $(1+1)EA_C$ which is illustrated in Algorithm \ref{Alg3}. The $(1+1)EA_{MC}$ first performs  bitwise mutation with probability $q_m$,  and then applies binomial crossover with rate $C_R$ to generate a candidate solution for selection.

  \begin{algorithm}[ht]
  \caption{$(1+1)EA_C$}\label{Alg3}
  \begin{algorithmic}[1]

  \STATE counter $t=0$;
  \STATE randomly generate a solution $\mathbf x_0=(x_1,\dots,x_n)$;
  \WHILE{the stopping criterion is not satisfied}
  \STATE set $rndi\sim U\{1,2,\dots,n\}$;
  \FOR{$i=1,2,\dots,n$}
  \STATE  \begin{equation}\label{Mut2}v_i=\left\{\begin{aligned}& 1-x_i, && \mbox{if }  rnd1_i<q_m,\\ & x_i, && \mbox{otherwise},\end{aligned}\right.\quad rnd1_i\sim U[0,1];\end{equation}

  \STATE  \begin{equation}\label{Cross}y_i=\left\{\begin{aligned}& v_i, && \mbox{if } i=rndi \mbox{ or } rnd2_i<C_R,\\ & x_i, && \mbox{otherwise},\end{aligned}\right.\quad rnd2_i\sim U[0,1]\end{equation}

  \ENDFOR
  \IF{$f(\mathbf{y})\ge f(\mathbf{x}_t)$}
  \STATE $\mathbf{x}_{t+1}=\mathbf{y}$;
  \ENDIF
  \STATE $t=t+1$;
  \ENDWHILE
  \end{algorithmic}
  \end{algorithm}

The EAs investigated in this paper can be modeled as homogeneous Markov chains~\citep{he2003towards,He2016}. Given the error vector
\begin{equation}\label{ErrVec}
  \mathbf{\tilde{e}}=(e_0,e_1,\dots,e_L)',
\end{equation}
and the initial distribution
\begin{equation}\label{IniDis}
  \mathbf{\tilde{q}}^{[0]}=(q_0^{[0]},q_1^{[0]},\dots,q_L^{[0]})'
\end{equation}
 the transition matrix of  $(1+1)EA$ and $(1+1)EA_C$ for the optimization problem (\ref{OP1}) can be written  in the form
\begin{equation}\label{Trans}
 \mathbf{\tilde{R}}=(r_{i,j})_{(L+1)\times (L+1)},
\end{equation}
where
\begin{equation*}
  r_{i,j}=\Pr\{\mathbf{x}_{t+1}\in\mathcal{X}_i\mid\mathbf{x}_{t}\in\mathcal{X}_j\},\quad i,j=0,\dots,L.
\end{equation*}
Recalling that the solutions are updated by the elitist selection, we know   $\mathbf{\tilde{R}}$ is an upper triangular matrix that can be partitioned as
\begin{equation*}
  \mathbf{\tilde{R}}=\left(
               \begin{array}{cc}
                 1 & \mathbf{r}_0 \\
                 \mathbf{0} & \mathbf{R} \\
               \end{array}
             \right),
\end{equation*}
where $\mathbf{R}$ is the transition submatrix depicting the transitions between non-optimal states.

\subsection{Transition Matrices}
By elitist selection, a candidate $\mathbf{y}$ replaces a solution $\mathbf{x}$ if and only if $f(\mathbf{y})\ge f(\mathbf{x})$,  which is achieved if ``$l$ preferred  bits'' of $\mathbf{x}$ are changed. If there are multiple solutions that are better than $\mathbf{x}$, there could be multiple choices for both the number of mutated bits $l$ and the location of ``$l$ preferred  bits''.

\begin{example}\label{exam1}
For the OneMax problem,  $e(\mathbf x)$ equals to the amount of `0'-bits in $\mathbf{x}$. Denoting $e(\mathbf{x})=j$ and $e(\mathbf{y})=i$, we know $\mathbf{y}$ replaces $\mathbf x$ if and only if $j\ge i$. Then, to generate a candidate $\mathbf y$ replacing $\mathbf x$,  ``$l$ preferred bits'' can be confirmed as follows.
\begin{itemize}
  \item If $i=j$, ``$l$ preferred bits'' consist of  $l/2$ `1'-bits and  $l/2$ `0'-bits, where $l$ is an even number that is not greater than $\min\{2j,2(n-j)\}$.
  \item While $i<j$, ``$l$ preferred bits'' could be combinations of $j-i+k$ `0'-bits and $k$ `1'-bits ($l=j-i+2k$), where $0\le k\le\min\{i,n-j\}$. Here, $k$ is not greater than $i$, because $j-i+k$ could not be greater than $j$, the number of `0'-bits in $\mathbf{x}$. Meanwhile, $k$ does not exceed $n-j$, the number of `1'-bits in $\mathbf{x}$.
\end{itemize}

\end{example}

If an EA flips each bit with an identical probability, the probability of flipping $l$ bits are related to $l$ and independent of their locations. Denoting the probability of flipping $l$ bits by $P(l)$, we can confirm the connection between the transition probability $r_{i,j}$ and $P(l)$.

\subsubsection{Transition Probabilities for  OneMax}
As presented in Example \ref{exam1}, transition from level $j$ to level $i$ ($i<j$) results from flips of $j-i+k$ `0'-bits and $k$ `1'-bits. Then,
\begin{equation}\label{POne}
 {r}_{i,j}=\sum_{k=0}^{M}{C_{n-j}^{k}C_{j}^{k+\left( j-i \right)}}P(2k+j-i),
\end{equation}
where $M=\min \left\{ n-j,i \right\}$, $0\le i<j\le n$.

\subsubsection{Transition Probabilities for Deceptive}
According to definition of the Deceptive problem, we get the following map from $\mid\mathbf{x}\mid$ to $e(\mathbf{x})$.
\begin{equation}\label{DecMap}
\begin{matrix}
     & \mid\mathbf{x}\mid:  & 0   & 1   & \cdots & n-1 & n \\
     & e(\mathbf{x}): & 1   & 2   & \cdots & n   & 0 \\
  \end{matrix}
\end{equation}
Transition from level $j$ to level $i$ ($0\le i<j\le n$) is attributed to one of the following cases.
\begin{itemize}
  \item If $i\ge 1$, the amount of `1'-bits decreases from $j-1$ to $i-1$. This transition results from change of $j-i+k$ `1'-bits and $k$ `0'-bits, where
      $0\le k\le \min\{n-j+1,i-1\}$;
  \item if $i=0$, all of $n-j+1$ `0'-bits are flipped, and all of its `1'-bits keep unchanged.
\end{itemize}

Accordingly, we know
\begin{align}
& {r}_{i,j}=\left\{\begin{aligned}&\sum_{k=0}^{M}{C_{n-j+1}^{k}C_{j-1}^{k+\left( j-i \right)}}P(2k+j-i), && i\ge 1,\\
& P(n-j+1), && i=0, \end{aligned}
\right.\label{PDec}
\end{align}
where $M=\min \left\{ n-j+1,i-1 \right\}$.

\subsection{Performance Metrics}

We propose two metrics to evaluate the performance of EAs,
which are the expected approximation error (EAE) and the tail probability (TP) of EAs for $t$ consecutive iterations. The approximation error was considered in previous work~\cite{he2016analytic} but the tail probability is a new performance metric.
\begin{definition}
Let $\{\mathbf{x}_t,t=1,2\dots\}$ be the individual sequence of an individual-based EA. The expected approximation error (EAE) after $t$ consecutive iterations is
\begin{equation}
  e^{[t]}=\mathbb{E}[e(\mathbf x_t)]=\sum_{i=0}^{L}e_i\Pr\{e(\mathbf x_t)=e_i\}.
\end{equation}
\end{definition}
  EAE  is the fitness gap between a solution and the optimum. It measures solution quality after running $t$ generations.

\begin{definition} Given $i>0$,
the tail probability (TP) of  the approximation error   that $e(\mathbf{x}_t)$ is greater than or equal to $e_i$ is defined as
\begin{equation}
  p^{[t]}(e_i)=\Pr\{e(\mathbf x_t)\ge e_i\}.
\end{equation}
\end{definition}

TP  is the probability distribution of a found solution over non-optimal levels where $i >0$. The sum of TP is the probability of not finding the optimum.

Given two EAs $\mathcal{A}$ and $\mathcal{B}$, if  both EAE and TP of Algorithm $\mathcal{A}$ are smaller than those of Algorithm $\mathcal{B}$ for any iteration budget, we say Algorithm $\mathcal{A}$ outperforms Algorithm $\mathcal{B}$ on problem (\ref{OP1}).

\begin{definition}\label{Def2}
  Let $\mathcal{A}$ and $\mathcal{B}$ be two EAs applied to problem (\ref{OP1}).
  \begin{enumerate}
\item   Algorithm $\mathcal{A}$  {outperforms} $\mathcal{B}$, denoted by $\mathcal{A}\succsim \mathcal{B}$, if it holds that
        \begin{itemize}
          \item $e_{\mathcal A}^{[t]}- e_{\mathcal B}^{[t]} \le 0$, $\forall\,t>0$;
          \item $p_{\mathcal A}^{[t]}(e_i)-  p_{\mathcal B}^{[t]}(e_i)\le 0$, $\forall\,t>0$, $0<i<L$.
        \end{itemize}
\item Algorithm $\mathcal{A}$  {asymptotically outperforms} $\mathcal{B}$ on problem (\ref{OP1}), denoted by $\mathcal{A}\succsim^{a} \mathcal{B}$, if it holds that
        \begin{itemize}
          \item $ \lim_{t\to \infty } e_{\mathcal A}^{[t]}- e_{\mathcal B}^{[t]} \le 0$;
          \item $\lim_{t\to+\infty} p_{\mathcal A}^{[t]}(e_i)- p_{\mathcal B}^{[t]}(e_i) \le 0$.
        \end{itemize}
  \end{enumerate}
\end{definition}

The asymptotic outperformance is weaker than the outperformance.

\section{Comparison of Transition Probabilities of the Two EAs} \label{SecTrans}
In this section, we
compare  transition probabilities of  $(1+1)EA$  and $(1+1)EA_C$. According to the connection  between $r_{i,j}$ and $P(l)$, comparison of transition probabilities can be conducted by considering the probabilities of flipping ``$l$ preferred bits''.

\subsection{Probabilities of Flipping   Preferred Bits }

Denote probabilities of $(1+1)EA$ and $(1+1)EA_C$ of flipping ``$l$ preferred bits'' by $P_1(l,p_m)$ and $P_2(l,C_R,q_m)$, respectively. By (\ref{Mut1}), we know
\begin{align}
  P_1(l,p_m)=(p_m)^l(1-p_m)^{n-l}. \label{P1}
\end{align}

Since the mutation and the binomial crossover in Algorithm \ref{Alg3} are mutually independent, we can get the probability by considering the crossover first. When flipping  ``$l$ preferred bits'' by the $(1+1)EA_C$, there are $l+k$ ($0\le k\le n-l$) bits of $\mathbf{y}$ set as $v_i$ by (\ref{Cross}), the probability of which  is
\begin{equation*}
  P_{C}(l+k,C_R)=\frac{l+k}{n}(C_R)^{l+k-1}(1-C_R)^{n-l-k}.
\end{equation*}
If only  ``$l$ preferred bits'' are flipped, we know,
\begin{align}
     P_2(l,C_R,q_m)&=\sum_{k=0}^{n-l}C_{n-l}^{k}P_C(l+k,C_R)(q_m)^l\left(1-q_m\right)^k                                                                                                                            \nonumber\\
     &=\frac{1}{n}\left[l+(n-l)C_R-nq_mC_R\right](C_R)^{l-1}(q_m)^l\left(1-q_mC_R\right)^{n-l-1}.\label{P3}
  \end{align}

Note that   $(1+1)EA_C$ degrades to  $(1+1)EA$ when $C_R=1$, and  $(1+1)EA$ becomes the random search while $p_m=1$. Thus, we assume that $p_m$, $C_R$ and $q_m$ are located in $(0,1)$. For a fair comparison of transition probabilities, we consider the identical parameter setting
\begin{equation}\label{ParaSetting}
  p_m=C_Rq_m=p,\quad 0<p<1.
\end{equation}

Then, we know $q_m=p/C_R$, and equation (\ref{P3}) implies
\begin{equation}\label{P3'}
{P_2(l,C_R,p/C_R)=\frac{1}{n}\left[(n-l)+\frac{l-np}{C_R}\right]p^l(1-p)^{n-l-1}}.
\end{equation}
Subtracting (\ref{P1}) from (\ref{P3'}), we have
\begin{align}
 & {P_2(l,C_R,p/C_R)-P_1(l,p)}
 ={\left\{\frac{1}{n}\left[(n-l)+\frac{l-np}{C_R}\right]-(1-p)\right\}p^l(1-p)^{n-l-1}}\nonumber\\
=&{\left(\frac{1}{C_R}-1\right)\left(\frac{l}{n}-p\right)p^l(1-p)^{n-l-1}}.\label{DP1P3}
\end{align}
From the fact that $0<C_R<1$, we conclude that $P_2(l,C_R,p/C_R)$ is greater than $P_1(l,p)$ if and only if $l>np$. That is, the introduction of the binomial crossover in the $(1+1)EA$ leads to the enhancement of exploration ability of the $(1+1)EA_C$. We get the following theorem  for the case that $p\le \frac{1}{n}$.

\begin{theorem}\label{T2}
  While $0<p\le \frac{1}{n}$, it holds for all $1\le l\le n$ that
  $P_1(l,p)\le P_2(l,C_R,p/C_R).$
\end{theorem}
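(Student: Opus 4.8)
The plan is to reason directly from the already-established difference formula (\ref{DP1P3}), which expresses
\[
P_2(l,C_R,p/C_R)-P_1(l,p)=\left(\frac{1}{C_R}-1\right)\left(\frac{l}{n}-p\right)p^l(1-p)^{n-l-1}.
\]
Since the theorem only asks for the inequality $P_1\le P_2$ on the whole range $1\le l\le n$, the goal reduces to showing that under the hypothesis $0<p\le \frac1n$ this difference is nonnegative for every such $l$. I would first dispose of the two factors that are manifestly positive: the assumption $0<C_R<1$ gives $\frac{1}{C_R}-1>0$, and since $0<p<1$ we have $p^l>0$ and $(1-p)^{n-l-1}>0$ for all relevant $l$. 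Hence the sign of the whole expression is governed entirely by the middle factor $\frac{l}{n}-p$.

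The key step is then to observe that the hypothesis $p\le \frac1n$ forces $\frac{l}{n}-p\ge 0$ for every integer $l$ with $1\le l\le n$. Indeed, the smallest such value of $\frac{l}{n}$ is attained at $l=1$, giving $\frac1n$, so $\frac{l}{n}-p\ge \frac1n-p\ge 0$. Combining this with the positivity of the remaining factors yields $P_2(l,C_R,p/C_R)-P_1(l,p)\ge 0$, which is exactly the claimed inequality. This is essentially a one-line argument once formula (\ref{DP1P3}) is in hand, since the theorem is the clean special case of the sign analysis already sketched right before the statement (where it was noted that $P_2>P_1$ iff $l>np$, equivalently $\frac{l}{n}>p$).

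I do not anticipate a genuine obstacle here; the only points requiring a little care are the edge cases in the exponents. When $l=n$ the factor $(1-p)^{n-l-1}=(1-p)^{-1}$ appears, which is still well-defined and positive because $0<p<1$, so the argument is unaffected; and at $l=1$ the bound is tight (equality $\frac1n-p=0$ can occur when $p=\frac1n$), which is consistent with the theorem asserting only the non-strict inequality $P_1\le P_2$. I would therefore present the proof as: cite (\ref{DP1P3}), note the three positive factors, and verify $\frac{l}{n}\ge \frac1n\ge p$ for $1\le l\le n$ under the hypothesis, concluding nonnegativity of the difference.
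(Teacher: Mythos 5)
Your proposal is correct and follows exactly the route the paper takes: the paper's own proof simply cites equation (\ref{DP1P3}) and observes that setting $p\le\frac{1}{n}$ makes the factor $\frac{l}{n}-p$ nonnegative for all $1\le l\le n$, which is precisely your sign analysis. Your write-up is just a more explicit version of the same one-line argument, including the harmless edge cases.
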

\begin{proof}
The result can be obtained directly from equation (\ref{DP1P3}) by setting $p\le\frac{1}{n}$.
\end{proof}

\subsection{Comparison of Transition Probabilities}

 Given transition matrices from two EAs, one transition matrix dominating another is defined   as follows.

\begin{definition}\label{Def1}
  Let $\mathcal{A}$ and $\mathcal{B}$ be two EAs with an identical initialization mechanism. $\mathbf{\tilde{A}}=(a_{i,j})$ and $\mathbf{\tilde{B}}=(b_{i,j})$ are the transition matrices of $\mathcal{A}$ and $\mathcal{B}$, respectively.
  It is said that $\mathbf{\tilde{A}}$ \textbf{dominates} $\mathbf{\tilde{B}}$, denoted by $\mathbf{\tilde{A}}\succeq\mathbf{\tilde{B}}$, if it holds that
        \begin{enumerate}
          \item $a_{i,j}\ge b_{i,j},\quad\forall \,0\le i<j\le L$;
          \item $a_{i,j}> b_{i,j},\quad\exists\, 0\le i<j\le L$.
        \end{enumerate}
\end{definition}

Denote the transition probabilities of   $(1+1)EA$ and $(1+1)EA_C$ by $p_{i,j}$ and $s_{i,j}$, respectively.  For the OneMax problem and Deceptive problem, we get the relation of transition dominance on the premise that $p_m=C_Rq_m=p\le \frac{1}{n}$.

\begin{theorem}\label{T4}
For  $(1+1)EA$ and  $(1+1)EA_C$, denote their transition matrices by $\mathbf{\tilde{P}}$ and $\mathbf{\tilde{S}}$, respectively. On the condition that $p_m=C_Rq_m=p\le \frac{1}{n}$,  it holds for problem (\ref{OP1}) that
\begin{equation}\label{temp23}
  \mathbf{\tilde{S}} \succeq \mathbf{\tilde{P}}.
\end{equation}
\end{theorem}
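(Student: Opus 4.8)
The plan is to reduce the matrix-level dominance claim to the entrywise comparison of flip probabilities already established in Theorem \ref{T2}. The starting observation is that, because both $(1+1)EA$ and $(1+1)EA_C$ flip each bit with a location-independent probability, their transition probabilities $p_{i,j}$ and $s_{i,j}$ are given by exactly the same combinatorial formulas---(\ref{POne}) for OneMax and (\ref{PDec}) for Deceptive---with the only difference being that the generic factor $P(l)$ is instantiated as $P_1(l,p)$ for $\mathbf{\tilde{P}}$ and as $P_2(l,C_R,p/C_R)$ for $\mathbf{\tilde{S}}$. Thus each entry with $i<j$ is a sum $\sum_l w_{i,j}(l)\,P(l)$ in which the weights $w_{i,j}(l)$ (products of binomial coefficients such as $C_{n-j}^{k}C_{j}^{k+(j-i)}$) are nonnegative and, crucially, identical across the two algorithms.

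First I would verify condition 1 of Definition \ref{Def1}. Fix any pair $0\le i<j\le L$. Every index $l$ appearing in the relevant sum satisfies $1\le l\le n$, so Theorem \ref{T2} gives $P_1(l,p)\le P_2(l,C_R,p/C_R)$ under the hypothesis $p\le\frac{1}{n}$. Multiplying by the common nonnegative weights and summing preserves the inequality termwise, yielding $p_{i,j}\le s_{i,j}$ for all $i<j$. The case $i=0$ of the Deceptive formula (\ref{PDec}), where the entry is the single term $P(n-j+1)$, is handled identically since $1\le n-j+1\le n$.

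Next I would establish the strict inequality required by condition 2. Equation (\ref{DP1P3}) shows that $P_2(l,C_R,p/C_R)-P_1(l,p)$ is strictly positive whenever $l>np$; since $p\le\frac{1}{n}$ forces $np\le 1$, any flip count $l\ge 2$ yields a strict gap. It then suffices to exhibit one entry whose sum contains a term with $l\ge 2$ carrying a positive weight: for $n\ge 2$ the OneMax entry $r_{0,2}$ reduces to its $k=0$ term with $l=2$ and weight $C_{n-2}^{0}C_{2}^{2}=1$, while the Deceptive entry $r_{0,1}=P(n)$ has $l=n\ge 2$. Either one certifies $s_{i,j}>p_{i,j}$ for that pair, completing the verification that $\mathbf{\tilde{S}}\succeq\mathbf{\tilde{P}}$.

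I expect the only real subtlety---rather than a genuine obstacle---to be justifying that the two algorithms share the same combinatorial weights, that is, that the binomial crossover of $(1+1)EA_C$ preserves location-independence of the flip probabilities despite its guaranteed crossover index $rndi$. This symmetry is already absorbed into the derivation of $P_2$ in (\ref{P3}) through the factor $\frac{l+k}{n}$, so once it is invoked the remainder of the argument is a routine monotonicity of weighted sums with nonnegative coefficients.
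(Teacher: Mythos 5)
Your proof is correct and takes essentially the same route as the paper's: both reduce the entrywise comparison $p_{i,j}\le s_{i,j}$ to Theorem~\ref{T2} by observing that the two algorithms' transition probabilities are weighted sums of $P(l)$ with identical nonnegative combinatorial coefficients, so the inequality on the flip probabilities passes through the sums. You go slightly further than the paper by explicitly checking the strict-inequality clause (condition 2) of Definition~\ref{Def1} via equation~(\ref{DP1P3}) and a concrete entry containing a term with $l\ge 2$; the paper's proof establishes only the weak inequalities and leaves this clause implicit, so your addition is a genuine (if small) improvement rather than a deviation.
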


\begin{proof}
Denote the collection of all solutions at level $k$ by $\mathcal{S}(k)$, $k=0,1,\dots,n$. We prove the result by considering the transition probability
$$r_{i,j}=\Pr\{\mathbf{y}\in\mathcal{S}(i)\mid\mathbf{x}\in\mathcal{S}(j)\},\quad (i< j).$$

Since the function values of solutions are only related to the number of `1'-bits, the probability to generate a solution $\mathbf{y}\in\mathcal{S}(i)$ by performing mutation on $\mathbf{x}\in\mathcal{S}(j)$ depends on the Hamming distance
\begin{equation*}
  l=H(\mathbf{x},\mathbf{y}).
\end{equation*}
Given $\mathbf{x}\in\mathcal{S}_j$, $\mathcal{S}(i)$ can be partitioned as
$$\mathcal{S}(i)=\bigcup_{l=1}^{L}\mathcal{S}_l(i),$$
where $\mathcal{S}_l(i)=\{\mathbf{y}\in\mathcal{S}(i)\mid H(\mathbf{x},\mathbf{y})=l\}$, and
$L$ is a positive integer that is smaller than or equal to $n$.

Accordingly, the probability to transfer from level $j$ to $i$ is confirmed as
\begin{align*}
 r_{i,j}=\sum_{l=1}^{L}\Pr\{\mathbf{y}\in\mathcal{S}_l(i)\mid\mathbf{x}\in\mathcal{S}(j)\}=\sum_{l=1}^{L}\mid\mathcal{S}_l(i)\mid P(l),
\end{align*}
where $\mid\mathcal{S}_l(i)\mid$ is the size of $\mathcal{S}_l(i)$, $P(l)$ the probability to flip ``$l$ preferred bits''. Thus, we have
\begin{align}
& p_{i,j}=\sum_{l=1}^{L}\Pr\{\mathbf{y}\in\mathcal{S}_l(j)\mid\mathbf{x}\}=\sum_{l=1}^{L}\mid\mathcal{S}_l(j)\mid P_1(l,p),\label{r1}\\
&  s_{i,j}=\sum_{l=1}^{L}\Pr\{\mathbf{y}\in\mathcal{S}_l(j)\mid\mathbf{x}\}=\sum_{l=1}^{L}\mid\mathcal{S}_l(j)\\mid P_2(l,C_R,p/C_R).\label{r3}
\end{align}
Since $p\le 1/n$, Theorem \ref{T2} implies that $$ P_1(l,p)\le P_2(l,C_R,p/C_R),\quad\forall\,\,1\le l\le n.$$
Combining it with (\ref{r1}) and (\ref{r3}) we know
\begin{equation}\label{temp21}
p_{i,j}\le s_{i,j}, \quad\forall\,\, 0\le i<j\le n.\end{equation}
Then, we get the result by Definition \ref{Def2}.
\end{proof}

\begin{example}\textbf{[Comparison of transition probabilities for the OneMax problem]}
Let $p_m=C_Rq_m=p\le\frac{1}{n}$. By (\ref{POne}), we have
\begin{align}
&{p_{i,j}}=\sum_{k=0}^M{C_{n-j}^{k}C_{j}^{k+\left( j-i \right)}}P_1(2k+j-i,p),\label{TP1}\\
&{s_{i,j}}=\sum_{k=0}^M{C_{n-j}^{k}C_{j}^{k+\left( j-i \right)}}P_2(2k+j-i,C_R,p/C_R).\label{TP3}
\end{align}
where $M={\min \left\{ n-j,i \right\}}$. Since $p\le 1/n$, Theorem \ref{T2} implies that $$ P_1(2k+j-i,p)\le P_2(2k+j-i,C_R,p/C_R),$$
and by (\ref{TP1}) and (\ref{TP3}) we have
$$p_{i,j}\le s_{i,j}, \quad\forall\,\, 0\le i<j\le n.$$

\end{example}

\begin{example}\label{exam3}
\textbf{[Comparison of transition probabilities for the Deceptive problem]}
Let $p_m=C_Rq_m=p\le\frac{1}{n}$. Equation (\ref{PDec}) implies that
\begin{equation}\label{TP11}
{p_{i,j}}=\left\{\begin{aligned}&\sum_{k=0}^{M}{C_{n-j+1}^{k}C_{j-1}^{k+\left( j-i \right)}}P_1(2k+j-i,p), && i>0,\\
& P_1(n-j+1,p), && i=0, \end{aligned}
\right.
\end{equation}

\begin{equation}\label{TP33}
{s_{i,j}}=\left\{\begin{aligned}& \sum_{k=0}^{M}{C_{n-j+1}^{k}C_{j-1}^{k+\left( j-i \right)}}P_2(2k+j-i,C_R,\frac{p}{C_R}), && i>0,\\
& P_2(n-j+1,C_R,p/C_R), && i=0, \end{aligned}
\right.
\end{equation}
where $M=\min \left\{ n-j+1,i-1 \right\}$. Similar to analysis of the Example 2, we know when $p\le 1/n$,
$$p_{i,j}\le s_{i,j}.$$
\end{example}

Nevertheless,  if $p>\frac{1}{n}$, we cannot get  Theorems \ref{T2}. Since
the differences among $p_{i,j}$ and $q_{i,j}$ depends on the characteristics of problem (\ref{OP1}),  Theorem \ref{T4} does not hold,  too.

\section{Analysis of Asymptotic Performance}\label{SecAsy}

Since the transition matrix of  $(1+1)EA_C$ dominates that of $(1+1)EA$, this section proves that  $(1+1)EA_C$ asymptotically outperforms $(1+1)EA$  using the average convergence rate~\cite{He2016,CHEN2021200}.

\begin{definition}
  The average convergence rate (ACR) of an EA for $t$ generation is
  \begin{equation}
    R_{EA}(t)=1-\left({e^{[t]}}/{e^{[0]}}\right)^{1/t}.
  \end{equation}
\end{definition}

The following lemma presents the asymptotic characteristics of the ACR, by which
we get the result on the asymptotic performance of EAs.
\begin{lemma}\label{Pro6}
\textbf{\citep[Theorem 1]{He2016}}
  Let $\mathbf{R}$ be the transition submatrix associated with a convergent EA. Under random initialization (i.e., the EA may start at any initial state with a positive probability), it holds
  \begin{equation}
    \lim_{t\to +\infty}R_{EA}(t)=1-\rho(\mathbf{R}),
  \end{equation}
  where $\rho(\mathbf{R})$ is the spectral radius of $\mathbf{R}$.
\end{lemma}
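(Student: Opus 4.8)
The plan is to reduce the statement to the classical Gelfand spectral-radius formula by rewriting the expected error as an explicit bilinear form in $\mathbf R^t$. First I would use the block structure $\mathbf{\tilde R}=\begin{pmatrix}1 & \mathbf r_0\\ \mathbf 0 & \mathbf R\end{pmatrix}$ together with the fact that $e_0=0$: since the optimal level contributes nothing to the error and the absorbing state never feeds probability back into the non-optimal block, the non-optimal part of the distribution evolves purely as $\mathbf R^t$ applied to the non-optimal part of $\mathbf{\tilde q}^{[0]}$. Writing $\mathbf e=(e_1,\dots,e_L)'$ for the vector of positive error levels and $\mathbf q^{[0]}$ for the restricted initial distribution, this yields the clean identity $e^{[t]}=\mathbf e'\,\mathbf R^t\,\mathbf q^{[0]}$, and hence $1-R_{EA}(t)=(e^{[t]}/e^{[0]})^{1/t}$. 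Because $(e^{[0]})^{-1/t}\to 1$, it suffices to evaluate $\lim_{t\to\infty}(e^{[t]})^{1/t}$.

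Next I would exploit positivity to strip the bilinear form down to a pure growth rate. All non-optimal error levels satisfy $e_i>0$, the random-initialization hypothesis guarantees $q_j^{[0]}>0$ for every non-optimal $j$, and every entry of $\mathbf R^t$ is nonnegative. Setting $c_1=(\min_i e_i)(\min_j q_j^{[0]})>0$ and $c_2=(\max_i e_i)(\max_j q_j^{[0]})$, these bounds sandwich the bilinear form between constant multiples of the entry sum of $\mathbf R^t$:
\begin{equation*}
 c_1\,\mathbf 1'\mathbf R^t\mathbf 1\;\le\; e^{[t]}\;\le\; c_2\,\mathbf 1'\mathbf R^t\mathbf 1 .
\end{equation*}
Since $c_1^{1/t},c_2^{1/t}\to 1$, the quantity $(e^{[t]})^{1/t}$ shares its limit with $(\mathbf 1'\mathbf R^t\mathbf 1)^{1/t}$.

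I would then relate the entry sum to a matrix norm. For a nonnegative matrix one has $\|\mathbf R^t\|_{\max}\le \mathbf 1'\mathbf R^t\mathbf 1\le L^2\|\mathbf R^t\|_{\max}$, so taking $t$-th roots and using $(L^2)^{1/t}\to 1$ shows that $(\mathbf 1'\mathbf R^t\mathbf 1)^{1/t}$ and $\|\mathbf R^t\|_{\max}^{1/t}$ have the same limit. By the equivalence of matrix norms on a finite-dimensional space and Gelfand's formula, $\lim_{t\to\infty}\|\mathbf R^t\|^{1/t}=\rho(\mathbf R)$ for any norm, which closes the chain of equalities and gives $\lim_{t\to\infty}(1-R_{EA}(t))=\rho(\mathbf R)$, as claimed.

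The step I expect to be the main obstacle is the lower bound, specifically justifying that the limit is exactly $\rho(\mathbf R)$ rather than a strictly faster decay driven by a smaller eigenvalue. This is precisely where the random-initialization assumption is indispensable: it forces $\mathbf q^{[0]}$ to place strictly positive mass on every non-optimal level, which prevents the initial error profile from being annihilated along the dominant Perron direction of the nonnegative matrix $\mathbf R$. Without it, $e^{[t]}$ could decay faster than $\rho(\mathbf R)^t$ and the identity would fail. The sandwiching by $\mathbf 1'\mathbf R^t\mathbf 1$ is the device that converts this positivity directly into the sharp lower bound, so that I never have to track Jordan-block sizes, eigenvalue multiplicities, or the possible reducibility of $\mathbf R$ explicitly.
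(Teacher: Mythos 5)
The paper does not actually prove this lemma: it is imported verbatim as Theorem~1 of He and Lin (2016), so there is no in-paper argument to compare against. Your proposal is a correct, self-contained proof, and it is essentially the standard route one would take: reduce $e^{[t]}$ to the bilinear form $\mathbf e'\mathbf R^t\mathbf q^{[0]}$ via the block upper-triangular structure and $e_0=0$, sandwich it between constant multiples of $\mathbf 1'\mathbf R^t\mathbf 1$ using strict positivity of the non-optimal error levels and of the restricted initial distribution, and invoke Gelfand's formula through norm equivalence. The sandwich is exactly the right device: it is where the random-initialization hypothesis enters (without $\min_j q_j^{[0]}>0$ the lower bound fails and the decay can be strictly faster than $\rho(\mathbf R)^t$), and it sidesteps Jordan structure and reducibility entirely, which is cleaner than a Perron--Frobenius or spectral-decomposition argument. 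Your two-sided bound also delivers, as a free by-product, the inequality $e^{[0]}(\rho(\mathbf R)-\epsilon)^t<e^{[t]}<e^{[0]}(\rho(\mathbf R)+\epsilon)^t$ that the paper later uses without derivation in the proof of Lemma~\ref{Pro7}. Two minor points worth making explicit: you need $e_i>0$ for every non-optimal level $i\ge 1$ (the paper's chain $0=e_0\le e_1\le\dots\le e_L$ formally permits $e_1=0$, but then level $1$ would be optimal, so this is a harmless sharpening of the hypothesis), and the degenerate case $\rho(\mathbf R)=0$ (nilpotent $\mathbf R$, so $e^{[t]}=0$ eventually) should be checked separately since $(e^{[t]}/e^{[0]})^{1/t}$ is then eventually $0$ and the claimed limit still holds.
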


\begin{lemma}\label{Pro7}
If $\mathbf{\tilde{A}}\succeq\mathbf{\tilde{B}}$, there exists $T>0$ such that
\begin{enumerate}
  \item $e_{\mathcal{A}}^{[t]}\le e_{\mathcal{B}}^{[t]}$, $\forall\,t>T$;
  \item $p_{\mathcal{A}}^{[t]}(e_i)\le p_{\mathcal{B}}^{[t]}(e_i)$, $\forall\,t>T, \, 1\le i\le L$.
\end{enumerate}
\end{lemma}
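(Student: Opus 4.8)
The plan is to reduce both performance metrics to nonnegative linear functionals of the distribution over the non-optimal error levels and then compare their asymptotic decay rates through the average convergence rate of Lemma \ref{Pro6}. Writing the non-optimal part of the state distribution as $\mathbf{q}^{[t]}=\mathbf{R}^t\mathbf{q}^{[0]}$, where $\mathbf{R}$ is the transition submatrix and $\mathbf{q}^{[0]}$ the common initial distribution of the two algorithms, I would first record that
\[
e^{[t]}=\sum_{i=1}^{L}e_i q_i^{[t]}=\mathbf{c}'\mathbf{R}^t\mathbf{q}^{[0]},
\qquad
p^{[t]}(e_i)=\sum_{k\ge i}q_k^{[t]}=\mathbf{1}_i'\mathbf{R}^t\mathbf{q}^{[0]},
\]
with $\mathbf{c}=(e_1,\dots,e_L)'$ and $\mathbf{1}_i$ the $0/1$ indicator of the levels $\{i,\dots,L\}$; since $e_0=0$ and $i\ge 1$, both $\mathbf{c}$ and $\mathbf{1}_i$ are nonnegative and supported on non-optimal levels. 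Hence everything is governed by how fast $\mathbf{R}^t$ shrinks, i.e.\ by the spectral radius $\rho(\mathbf{R})$.

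Next I would compare the spectral radii. Let $\mathbf{A}$ and $\mathbf{B}$ denote the non-optimal submatrices of $\tilde{\mathbf{A}}$ and $\tilde{\mathbf{B}}$. Because selection is elitist, each transition matrix is upper triangular and every column sums to one, so for each level $j$ we have $a_{j,j}=1-\sum_{i<j}a_{i,j}$ and likewise for $b_{j,j}$. The dominance $\tilde{\mathbf{A}}\succeq\tilde{\mathbf{B}}$ gives $a_{i,j}\ge b_{i,j}$ for all $i<j$, whence $\sum_{i<j}a_{i,j}\ge\sum_{i<j}b_{i,j}$ and therefore $a_{j,j}\le b_{j,j}$ for every $j$. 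Since the eigenvalues of an upper-triangular matrix are exactly its diagonal entries, this yields $\rho(\mathbf{A})=\max_j a_{j,j}\le\max_j b_{j,j}=\rho(\mathbf{B})$.

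I would then close the argument with the average convergence rate. From its definition $e^{[t]}=e^{[0]}(1-R_{EA}(t))^t$, and the two algorithms share $e^{[0]}$. By Lemma \ref{Pro6}, $1-R_{\mathcal{A}}(t)\to\rho(\mathbf{A})$ and $1-R_{\mathcal{B}}(t)\to\rho(\mathbf{B})$. When $\rho(\mathbf{A})<\rho(\mathbf{B})$ these limits are separated, so there is $T_0$ with $0<1-R_{\mathcal{A}}(t)<1-R_{\mathcal{B}}(t)$ for $t>T_0$; raising to the $t$-th power and multiplying by $e^{[0]}>0$ gives $e_{\mathcal{A}}^{[t]}\le e_{\mathcal{B}}^{[t]}$. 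For the tail probabilities the same Perron--Frobenius mechanism underlying Lemma \ref{Pro6} applies to each nonnegative functional $\mathbf{1}_i'\mathbf{R}^t\mathbf{q}^{[0]}$, whose decay rate is again $\rho(\mathbf{R})$, so $p_{\mathcal{A}}^{[t]}(e_i)$ vanishes faster than $p_{\mathcal{B}}^{[t]}(e_i)$; taking $T$ as the maximum of the finitely many thresholds over $i=1,\dots,L$ produces a single $T$ serving both claims.

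The hard part will be the borderline case $\rho(\mathbf{A})=\rho(\mathbf{B})$, which Definition \ref{Def1} does not exclude: dominance forces only one strict off-diagonal inequality, and the induced strict decrease of a single diagonal entry need not lower the maximal diagonal entry that fixes $\rho$. Here the leading decay rates coincide and the comparison must be decided by the subdominant behaviour. I would settle it by a stochastic-ordering (coupling) argument: the column-sum identity together with $a_{i,j}\ge b_{i,j}$ shows that from every starting level $j$ the one-step law of $\mathcal{A}$ is first-order stochastically smaller than that of $\mathcal{B}$, i.e.\ $\sum_{k\ge i}a_{k,j}\le\sum_{k\ge i}b_{k,j}$ for all $i$; combined with the stochastic monotonicity of the chain on the ordered levels, this propagates by induction to $\sum_{k\ge i}q_{k,\mathcal{A}}^{[t]}\le\sum_{k\ge i}q_{k,\mathcal{B}}^{[t]}$, which is exactly the tail-probability inequality and, after weighting by the increasing sequence $e_i$, the expected-error inequality. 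Verifying the stochastic monotonicity of the transition matrices on the given level structure is the one genuinely problem-dependent step, and is where the technical effort concentrates.
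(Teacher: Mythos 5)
Your main line of argument is the same as the paper's: write $e^{[t]}$ and $p^{[t]}(e_i)$ as nonnegative linear functionals of $\mathbf{R}^t\mathbf{q}^{[0]}$, use upper-triangularity to identify $\rho(\mathbf{R})=\max_j r_{j,j}$, derive $a_{j,j}\le b_{j,j}$ from the column sums and the dominance of the off-diagonal entries, and then separate the two decay rates via Lemma \ref{Pro6} with an $\epsilon$ smaller than half the gap of the spectral radii. The paper also handles the tail probability exactly as you suggest, by observing that $p^{[t]}(e_i)$ is the expected error for the $0/1$ weight vector $(\underbrace{0,\dots,0}_{i},1,\dots,1)'$. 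So for the case $\rho(\mathbf{A})<\rho(\mathbf{B})$ your proof is correct and essentially identical to the published one.

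The interesting part is the borderline case you flag. You are right that Definition \ref{Def1} only requires one strict off-diagonal inequality, so it only yields $a_{j,j}\le b_{j,j}$ for every $j$ and hence $\rho(\mathbf{A})\le\rho(\mathbf{B})$; the paper's proof simply writes $a_{j,j}<b_{j,j}$ for all $1\le j\le L$ as if it followed from $\mathbf{\tilde{A}}\succeq\mathbf{\tilde{B}}$, which it does not. You have therefore located a genuine gap in the paper's own argument rather than in yours. (The gap is not vacuous even in the intended application: at $p=1/n$ one has $P_1(1,p)=P_2(1,C_R,p/C_R)$, so a column dominated only through $l=1$ transitions can have equal diagonal entries.) However, your proposed repair does not yet close it: the stochastic-ordering induction $\sum_{k\ge i}q_{k,\mathcal{A}}^{[t]}\le\sum_{k\ge i}q_{k,\mathcal{B}}^{[t]}$ requires stochastic monotonicity of (at least one of) the chains in the starting level, which is neither assumed in the lemma nor verified for these transition matrices, and which is exactly the kind of condition (\ref{conC3}) that the paper only establishes later, for OneMax, in Lemma \ref{L3}. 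As stated, the lemma for general dominating pairs with $\rho(\mathbf{A})=\rho(\mathbf{B})$ would need either a strengthened hypothesis (strict dominance in every column, which holds here when $p<1/n$) or a finer analysis of the subdominant terms; your proposal correctly diagnoses the problem but leaves its resolution conditional on an unproved monotonicity assumption.
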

\begin{proof}
By Lemma \ref{Pro6}, we know $\forall\,\epsilon>0$, there exists $T>0$ such that
\begin{equation}\label{temp19}
  e^{[0]}\left(\rho(\mathbf{R})-\epsilon\right)^t<e^{[t]} < e^{[0]}\left(\rho(\mathbf{R})+\epsilon\right)^t,\quad t>T.
\end{equation}
From the fact that the transition submatrix $\mathbf{R}$ of an RSH is upper triangular, we conclude
\begin{equation}\label{temp22}\rho(\mathbf{R})=\max\{r_{1,1},\dots, r_{L,L}\}.\end{equation}
Denote
$$\mathbf{\tilde{A}}=(a_{i,j})=\left(
               \begin{array}{cc}
                 1 & \mathbf{a}_0 \\
                 \mathbf{0} & \mathbf{A} \\
               \end{array}
             \right),\quad \mathbf{\tilde{B}}=(b_{i,j})=\left(
               \begin{array}{cc}
                 1 & \mathbf{b}_0 \\
                 \mathbf{0} & \mathbf{B} \\
               \end{array}
             \right).$$
While $\mathbf{\tilde{A}}\succeq\mathbf{\tilde{B}}$, it holds
$$ a_{j,j}=1-\sum_{i=0}^{j-1}a_{i,j}< 1-\sum_{i=0}^{j-1}b_{i,j}=b_{j,j},\,\,1\le j \le L.$$
Then, equation (\ref{temp22}) implies that
\begin{equation*}
  \rho(\mathbf{A})< \rho (\mathbf{B}).
\end{equation*}
Applying it to (\ref{temp19}) for $\epsilon <\frac{1}{2}(\rho(\mathbf{B})-\rho(\mathbf{A}))$, we have
\begin{equation}\label{temp20}
  e_{\mathcal{A}}^{[t]} < e^{[0]}\left(\rho(\mathbf{A})+\epsilon\right)^t<e^{[0]}\left(\rho(\mathbf{B})-\epsilon\right)^t<e_{\mathcal{B}}^{[t]},
\end{equation}
which proves the first conclusion.

Noting that the tail probability $p^{[t]}(e_i)$ can be taken as the expected approximation error of an optimization problem with error vector
$$\mathbf{e}=(\underbrace{0,\dots,0}_{i},1,\dots,1)',$$ by (\ref{temp20}) we have
$$p_{\mathcal{A}}^{[t]}(e_i)\le p_{\mathcal{B}}^{[t]}(e_i),\quad \forall\,t>T, \, 1\le i\le L.$$
The second conclusion is proven.
\end{proof}

Definition \ref{Def2} and Proposition \ref{Pro7} imply that dominance of transition matrix lead to the asymptotic outperformance. Then, we get the following theorem for comparing  the asymptotic performance  of  $(1+1)EA$ and $(1+1)EA_C$.
\begin{theorem}\label{T5}
If $C_R=C_Rq_m=p\le \frac{1}{n}$, the $(1+1)EA_C$  {asymptotically outperforms} the $(1+1)EA$ on problem (\ref{OP1}).
\end{theorem}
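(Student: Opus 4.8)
The plan is to obtain this theorem as a direct composition of Theorem \ref{T4} and Lemma \ref{Pro7}, which already carry essentially all of the analytical weight. Write $\mathcal{A}=(1+1)EA_C$ and $\mathcal{B}=(1+1)EA$, with transition matrices $\mathbf{\tilde{S}}$ and $\mathbf{\tilde{P}}$ respectively. The first step is simply to observe that the hypothesis of the present theorem is exactly the hypothesis $p_m=C_Rq_m=p\le\frac{1}{n}$ of Theorem \ref{T4}; that theorem then yields the transition-matrix dominance $\mathbf{\tilde{S}}\succeq\mathbf{\tilde{P}}$, i.e. $\mathbf{\tilde{A}}\succeq\mathbf{\tilde{B}}$ in the notation of Lemma \ref{Pro7}.

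Next I would feed this dominance into Lemma \ref{Pro7}. Since $\mathcal{A}$ and $\mathcal{B}$ share the same random initialization and are both convergent elitist chains (so that the spectral-radius characterization of Lemma \ref{Pro6} applies), Lemma \ref{Pro7} produces a threshold $T>0$ for which
\begin{equation*}
  e_{\mathcal{A}}^{[t]}\le e_{\mathcal{B}}^{[t]} \quad\text{and}\quad p_{\mathcal{A}}^{[t]}(e_i)\le p_{\mathcal{B}}^{[t]}(e_i),\qquad \forall\,t>T,\ 1\le i\le L.
\end{equation*}

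The final step is to weaken these eventual term-by-term inequalities to the limit form demanded by Definition \ref{Def2}. Because the inequalities hold for every $t>T$, passing to the limit $t\to\infty$ preserves them and gives $\lim_{t\to\infty}\bigl(e_{\mathcal{A}}^{[t]}-e_{\mathcal{B}}^{[t]}\bigr)\le 0$ together with $\lim_{t\to\infty}\bigl(p_{\mathcal{A}}^{[t]}(e_i)-p_{\mathcal{B}}^{[t]}(e_i)\bigr)\le 0$, which are precisely the two conditions for $\mathcal{A}\succsim^{a}\mathcal{B}$.

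I do not expect a genuine mathematical obstacle: this theorem is a corollary of the dominance result and its consequence for the average convergence rate. The only points demanding care are bookkeeping rather than mathematics — confirming that the displayed condition ``$C_R=C_Rq_m=p$'' is the intended ``$p_m=C_Rq_m=p\le\frac{1}{n}$'' that matches Theorem \ref{T4}, and checking that both chains are convergent under random initialization so that Lemma \ref{Pro6} legitimately underlies Lemma \ref{Pro7}.
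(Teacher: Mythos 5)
Your proposal matches the paper's own proof, which likewise obtains the result by combining Theorem \ref{T4} (transition-matrix dominance) with Lemma \ref{Pro7} and then reading off the limit inequalities of Definition \ref{Def2}. Your side remark that the stated condition ``$C_R=C_Rq_m=p$'' is a typo for ``$p_m=C_Rq_m=p$'' is also correct.
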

\begin{proof}
The proof can be completed by applying Theorem \ref{T4} and Lemma \ref{Pro7}.
\end{proof}

On condition that $C_R=C_Rq_m=p\le \frac{1}{n}$, Theorem \ref{T5} indicates that after sufficiently many number of iterations, the $(1+1)EA_C$ can performs better on problem (\ref{OP1}) than the $(1+1)EA$.

A further question is whether the $(1+1)EA_C$ outperforms the $(1+1)EA$ for $ t < +\infty$. We answer the question in next sections.

\section{Comparison of the Two EAs on OneMax}\label{SecExpi}

In this section, we show that the outperformance introduced by  binomial crossover can be obtained for the unimodel OneMax problem based  on the following lemma~\citep{WANG2021}.

\begin{lemma}\label{Th_ES}\textbf{\citep[Theorem 3]{WANG2021}}
Let
\begin{align*}
   &\mathbf{\tilde{e}}=(e_0,e_1,\dots,e_L)', \quad  \mathbf{\tilde{v}}=(v_0,v_1,\dots,v_L)',
\end{align*}
 where $0\le e_{i-1}\le e_{i}, i=1,\dots,L$, $v_i>0,i=0,1,\dots,L$. If transition matrices $\mathbf{\tilde{R}}$ and $\mathbf{\tilde{S}}$ satisfy
\begin{align}
\label{conC1}
&s_{j,j}  \ge  r_{j,j}, &&\forall\,\, j,
\\
\label{conC2}
& \sum^{i-1}_{l=0} (r_{l,j}-s_{l,j}) \ge 0 , &&\forall\,\,i<j,
\\
\label{conC3}
&  \sum^{i}_{l=0} ( s_{l,j-1}- s_{l,j})\ge 0 , &&\forall\,\,i<j-1,
\end{align}
 it holds
  $$\mathbf{\tilde{e}}'\mathbf{\tilde{R}}^t\mathbf{\tilde{v}}\le \mathbf{\tilde{e}}'\mathbf{\tilde{S}}^t\mathbf{\tilde{v}}.$$
\end{lemma}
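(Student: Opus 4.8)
The plan is to prove the scalar inequality by a telescoping decomposition of $\mathbf{\tilde S}^t-\mathbf{\tilde R}^t$ together with two applications of summation by parts (Abel summation). First I would use the identity
\[
\mathbf{\tilde S}^t-\mathbf{\tilde R}^t=\sum_{k=0}^{t-1}\mathbf{\tilde S}^{k}(\mathbf{\tilde S}-\mathbf{\tilde R})\mathbf{\tilde R}^{t-1-k},
\]
so that $\mathbf{\tilde e}'(\mathbf{\tilde S}^t-\mathbf{\tilde R}^t)\mathbf{\tilde v}=\sum_{k=0}^{t-1}\mathbf{\tilde e}'\mathbf{\tilde S}^{k}(\mathbf{\tilde S}-\mathbf{\tilde R})\mathbf{\tilde R}^{t-1-k}\mathbf{\tilde v}$, and it suffices to show every summand is nonnegative. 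Writing $\mathbf{u}'=\mathbf{\tilde e}'\mathbf{\tilde S}^{k}$ and $\mathbf{w}=\mathbf{\tilde R}^{t-1-k}\mathbf{\tilde v}$, each summand equals $\sum_{j}w_j\big(\sum_{i}u_i(s_{i,j}-r_{i,j})\big)$. Since $\mathbf{\tilde R}$ has nonnegative entries and $\mathbf{\tilde v}>0$, we have $w_j\ge0$, so the goal reduces to proving that the inner column sum $\sum_{i}u_i(s_{i,j}-r_{i,j})$ is nonnegative for every $j$.

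Next I would record the structural facts about the column partial sums that make the inner sum nonnegative. Setting $d_{i,j}=s_{i,j}-r_{i,j}$ and $D^{(m)}_j=\sum_{l=0}^{m}d_{l,j}$, column-stochasticity of both matrices gives $D^{(j)}_j=0$; condition (\ref{conC2}) forces $D^{(m)}_j\le0$ for $m\le j-2$, and condition (\ref{conC1}) yields $D^{(j-1)}_j=r_{j,j}-s_{j,j}\le0$. Hence all partial sums are $\le0$ and vanish at $m=j$. Because the matrices are upper triangular, $d_{i,j}=0$ for $i>j$, and summation by parts gives
\[
\sum_{i=0}^{j}u_i d_{i,j}=\sum_{i=0}^{j-1}(u_i-u_{i+1})D^{(i)}_j,
\]
where the boundary term drops out since $D^{(j)}_j=0$. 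Consequently the inner sum is nonnegative as soon as $\mathbf{u}$ is nondecreasing, i.e. $u_0\le\cdots\le u_L$.

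The remaining task — and the main obstacle — is to show that $\mathbf{u}=(\mathbf{\tilde S}^{k})'\mathbf{\tilde e}$ is nondecreasing for every $k$, which is exactly where condition (\ref{conC3}) enters. I would argue by induction on $k$: the base case $k=0$ is the hypothesis $e_0\le\cdots\le e_L$, and the inductive step requires $u^{(k+1)}_{j-1}\le u^{(k+1)}_{j}$, i.e. $\sum_{i}(s_{i,j}-s_{i,j-1})u^{(k)}_i\ge0$. A second summation by parts rewrites the equivalent inequality $\sum_{i}(s_{i,j-1}-s_{i,j})u^{(k)}_i\le0$ as $\sum_{i=0}^{j-1}(u^{(k)}_i-u^{(k)}_{i+1})E^{(i)}$ with $E^{(i)}=\sum_{l=0}^{i}(s_{l,j-1}-s_{l,j})$. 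Here (\ref{conC3}) guarantees $E^{(i)}\ge0$ for $i\le j-2$, the boundary entry $E^{(j-1)}=s_{j,j}\ge0$ is automatic, and the factors $u^{(k)}_i-u^{(k)}_{i+1}\le0$ come from the induction hypothesis, so every term is $\le0$ and the monotonicity of $\mathbf{u}$ propagates.

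Assembling the three pieces gives $\mathbf{\tilde e}'\mathbf{\tilde R}^t\mathbf{\tilde v}\le\mathbf{\tilde e}'\mathbf{\tilde S}^t\mathbf{\tilde v}$. The delicate parts are the monotonicity-preservation induction, which is where one genuinely needs the extra hypothesis (\ref{conC3}) rather than just the entrywise comparison of the two matrices, and the careful bookkeeping of the upper-triangular and column-stochastic boundary terms in both Abel summations. Once the sign of every column partial sum $D^{(m)}_j$ and $E^{(i)}$ is pinned down, the nonnegativity of each telescoped term is routine.
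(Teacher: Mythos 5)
The paper does not prove this lemma at all: it is imported verbatim as \citep[Theorem 3]{WANG2021}, so there is no in-paper argument to compare yours against. Judged on its own, your proof is correct and complete. The telescoping identity $\mathbf{\tilde S}^t-\mathbf{\tilde R}^t=\sum_{k=0}^{t-1}\mathbf{\tilde S}^{k}(\mathbf{\tilde S}-\mathbf{\tilde R})\mathbf{\tilde R}^{t-1-k}$ holds for non-commuting matrices, $\mathbf{w}=\mathbf{\tilde R}^{t-1-k}\mathbf{\tilde v}\ge 0$ is immediate, and your bookkeeping of the column partial sums checks out: $D_j^{(j)}=0$ from column-stochasticity plus upper-triangularity, $D_j^{(m)}\le 0$ for $m\le j-2$ from (\ref{conC2}), and $D_j^{(j-1)}=-(s_{j,j}-r_{j,j})\le 0$ from (\ref{conC1}), so the Abel-summed expression $\sum_{i=0}^{j-1}(u_i-u_{i+1})D_j^{(i)}$ is nonnegative once $\mathbf{u}$ is nondecreasing. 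The induction showing that $\mathbf{\tilde S}'$ preserves monotone vectors is exactly where (\ref{conC3}) is needed, and your identification of the boundary term $E^{(j-1)}=s_{j,j}\ge 0$ is right. Two small points you should make explicit rather than leave implicit: (i) you use that both $\mathbf{\tilde R}$ and $\mathbf{\tilde S}$ are column-stochastic and upper triangular, which is not stated in the lemma itself but holds throughout the paper because of elitist selection; (ii) the degenerate columns ($j=0$, and $j=1$ where (\ref{conC3}) is vacuous) are covered by your argument but deserve a sentence. Neither is a gap, just presentation.
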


For the EAs investigated in this study, conditions (\ref{conC1})-(\ref{conC3}) are satisfied thanks to the monotonicity of transition probabilities.
\begin{lemma}\label{L2}
When $p\le 1/n$ ($n\ge 3$), $P_1(l,p)$ and $P_2(l,C_R,p/C_R)$ are monotonously decreasing in $l$.
\end{lemma}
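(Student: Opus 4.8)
The plan is to treat the two functions separately, since $P_1$ is elementary while $P_2$ carries the extra crossover factor. For $P_1(l,p)=p^l(1-p)^{n-l}$ I would simply form the ratio of consecutive terms,
\[
\frac{P_1(l+1,p)}{P_1(l,p)}=\frac{p}{1-p},
\]
and observe that $p\le \frac{1}{n}\le\frac{1}{3}<\frac{1}{2}$ forces $\frac{p}{1-p}<1$, so $P_1$ strictly decreases in $l$.

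For $P_2$, I would first rewrite it in the factored form suggested by (\ref{DP1P3}): since $P_2(l,C_R,p/C_R)=P_1(l,p)+\bigl(\frac{1}{C_R}-1\bigr)\bigl(\frac{l}{n}-p\bigr)p^l(1-p)^{n-l-1}$, collecting the common factor $p^l(1-p)^{n-l-1}$ gives
\[
P_2(l,C_R,p/C_R)=p^l(1-p)^{n-l-1}B(l),\qquad B(l)=(1-p)+\gamma\Bigl(\tfrac{l}{n}-p\Bigr),
\]
where $\gamma=\frac{1}{C_R}-1>0$. Because $p\le\frac{1}{n}$, one has $\frac{l}{n}-p\ge 0$ for $l\ge 1$, so $B(l)>0$ and $B$ increases by $\gamma/n$ per unit step. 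Dividing $P_2(l+1)\le P_2(l)$ by the positive quantity $p^l(1-p)^{n-l-2}$ reduces the claim to $pB(l+1)\le(1-p)B(l)$, which after substituting $B(l+1)=B(l)+\gamma/n$ becomes the single inequality
\[
\gamma\Bigl[\tfrac{p}{n}-(1-2p)\bigl(\tfrac{l}{n}-p\bigr)\Bigr]\le(1-2p)(1-p).
\]

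The main obstacle is this last inequality, because the bracket can be positive while $\gamma=\frac{1}{C_R}-1$ is unbounded as $C_R\to 0$; controlling it is exactly where the standing assumption $q_m=p/C_R\in(0,1)$, i.e. $C_R>p$, must be used. I would split on the sign of the bracket $D:=\frac{p}{n}-(1-2p)\bigl(\frac{l}{n}-p\bigr)$. If $D\le 0$ the left-hand side is nonpositive while the right-hand side is positive (as $p<\frac{1}{2}$), so the inequality is immediate. If $D>0$, I would bound $\gamma<\frac{1-p}{p}$ using $C_R>p$; it then suffices to prove $\frac{1-p}{p}D\le(1-2p)(1-p)$, i.e. $D\le p(1-2p)$, which simplifies to $p\le(1-2p)l$. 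For $l\ge 1$ this follows from $1-2p\ge p$, equivalently $p\le\frac{1}{3}$, which is guaranteed by $p\le\frac{1}{n}$ together with $n\ge 3$. This closes the case analysis and yields $P_2(l+1)\le P_2(l)$, completing the lemma.
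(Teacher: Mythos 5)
Your proof is correct, but it reaches the conclusion by a more roundabout route than the paper's. For $P_1$ you and the paper do exactly the same thing (ratio $\frac{p}{1-p}\le\frac{1}{n-1}<1$). For $P_2$, the paper keeps the product form of (\ref{P3}) and computes the consecutive ratio directly,
\[
\frac{P_2(l+1,C_R,p/C_R)}{P_2(l,C_R,p/C_R)}
=\frac{(l+1)(1-C_R)+nC_R(1-p/C_R)}{l(1-C_R)+nC_R(1-p/C_R)}\cdot\frac{p}{1-p}
\le\frac{l+1}{l}\cdot\frac{1}{n-1}\le 1,
\]
where the first inequality is just $\frac{(l+1)a+b}{la+b}\le\frac{l+1}{l}$ for $a=1-C_R\ge 0$ and $b=nC_R(1-p/C_R)\ge 0$, and the last step uses $l\ge 1$ and $n\ge 3$. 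Note that $b\ge 0$ is the same hypothesis $q_m=p/C_R\le 1$ that you invoke in the form $C_R>p$ to get $\gamma=\frac{1}{C_R}-1<\frac{1-p}{p}$; the paper's packaging simply absorbs the potentially large $1/C_R$ into the nonnegative term $b$, so the blow-up of $\gamma$ as $C_R\to 0$ never surfaces and no case split is needed. Your affine-in-$\gamma$ decomposition $B(l)=(1-p)+\gamma(\tfrac{l}{n}-p)$, the reduction to $pB(l+1)\le(1-p)B(l)$, the split on the sign of $D$, and the final reduction to $p\le(1-2p)l$ (closed by $p\le\frac13$ from $n\ge 3$) all check out, so the argument is sound; it just trades the paper's one-line bound on the prefactor ratio for a two-case analysis. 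Both proofs ultimately consume the same hypotheses: $p\le 1/n$, $n\ge 3$, $0<C_R<1$, and $q_m=p/C_R<1$.
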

\begin{proof}
When $p\le 1/n$, equations (\ref{P1}) and (\ref{P3}) imply that
\begin{align}
 &   \frac{P_1(l+1,p)}{P_1(l,p)}=\frac{p}{1-p} \le \frac{1}{n-1},\label{temp4}\\
 & \frac{P_2(l+1,C_R,p/C_R)}{P_2(l,C_R,p/C_R)}=\frac{(l+1)(1-C_R)+nC_R(1-p/C_R)}{l(1-C_R)+nC_R(1-p/C_R)}\frac{p}{1-p}\le \frac{l+1}{l}\frac{p}{1-p}\le \frac{l+1}{l}\frac{1}{n-1},
\end{align}
all of which are not greater than $1$ when $n\ge 3$.
Thus, $P_1(l,p)$ and $P_2(l,C_R,p/C_R)$ are monotonously decreasing in $l$.
\end{proof}

\begin{lemma}\label{L3}
For the OneMax problem, $p_{i,j}$ and $s_{i,j}$ are monotonously decreasing in $j$.
 \end{lemma}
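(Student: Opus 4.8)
The plan is to establish the stronger pointwise claim that, for each fixed target level $i$ and each $j$ with $i<j\le n-1$, the transition probabilities obey $p_{i,j}\ge p_{i,j+1}$ and $s_{i,j}\ge s_{i,j+1}$; monotone decrease in $j$ over $\{i+1,\dots,n\}$ then follows by chaining these consecutive comparisons. I would start from the closed form (\ref{POne}), writing, for $r\in\{p,s\}$ with the matching $P\in\{P_1,P_2\}$,
\[
r_{i,j}=\sum_{b\ge 0}C_{n-j}^{b}C_{j}^{j-i+b}\,P(j-i+2b),\qquad r_{i,j+1}=\sum_{b\ge 0}C_{n-j-1}^{b}C_{j+1}^{j+1-i+b}\,P(j+1-i+2b),
\]
so that $r_{i,j}-r_{i,j+1}$ is a signed combination of the values $P(\ell)$. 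The first obstruction to notice is one of parity: every flip count in $r_{i,j}$ is $\equiv j-i\pmod 2$, while every flip count in $r_{i,j+1}$ is $\equiv j-i+1$. Hence equal arguments of $P$ never cancel and the naive idea of comparing the two flip-count distributions termwise fails. Instead I would dominate the $b$-th summand of $r_{i,j+1}$ (argument $j-i+2b+1$) by the $b$-th summand of $r_{i,j}$ (the adjacent argument $j-i+2b$), which is legitimate because $r_{i,j}$ always ranges over at least as many values of $b$ as $r_{i,j+1}$. Dividing out reduces the claim to the per-term inequality
\[
\frac{P(\ell+1)}{P(\ell)}\le\frac{C_{n-j}^{b}C_{j}^{j-i+b}}{C_{n-j-1}^{b}C_{j+1}^{j+1-i+b}}=\frac{(j-i+b+1)(n-j)}{(j+1)(n-j-b)},\qquad \ell=j-i+2b,
\]
into which I feed the ratio bounds from Lemma~\ref{L2}: $P_1(\ell+1)/P_1(\ell)=p/(1-p)\le 1/(n-1)$ from (\ref{temp4}), and the analogous $P_2(\ell+1)/P_2(\ell)\le\frac{\ell+1}{\ell}\cdot\frac{1}{n-1}$. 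For the mutation-only chain this closes at once: the right-hand side is smallest at $b=0$, where the requirement reduces to $j+1\le(n-1)(j-i+1)$, true for all $i<j$ once $n\ge 3$ since $j-i+1\ge 2$. This gives $p_{i,j}\ge p_{i,j+1}$.

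The hard part is $s_{i,j}$, the binomial-crossover chain. The extra factor $\frac{\ell+1}{\ell}$ in the $P_2$ ratio bound makes the per-term inequality fail exactly at the boundary configurations $i=j-1$ with $j$ close to $n$ (and $C_R$ small): when $j=n-1$, $i=n-2$, $b=0$ the bound needs $P_2(\ell+1)/P_2(\ell)\le 2/n$ but Lemma~\ref{L2} only delivers $2/(n-1)$. In such cases the single adjacent term of $r_{i,j}$ is no longer large enough on its own, and one must exploit the additional higher-$b$ terms that $r_{i,j}$ carries but $r_{i,j+1}$ does not (e.g.\ the surplus $b=1$ term of $r_{n-2,n-1}$) to absorb the deficit. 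Making this rigorous requires either sharper consecutive-ratio estimates for $P_2$ taken directly from (\ref{P3'}) rather than the crude bound of Lemma~\ref{L2}, or an aggregated comparison that redistributes mass across several $b$; this is where I expect the main difficulty.

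In parallel I would pursue a coupling that sidesteps the parity bookkeeping. Realizing a level-$(j+1)$ state as a level-$j$ state with one extra zero-bit, and grouping flip patterns by their restriction $U$ off that distinguished bit, the difference collapses to $r_{i,j}-r_{i,j+1}=\sum_{c(U)=i}\phi(|U|)-\sum_{c(U)=i-1}\phi(|U|)$ with $\phi(m)=P(m)-P(m+1)\ge 0$ and $c(U)$ the offspring zero-count on the remaining $n-1$ bits. For $P_1$ the weight is geometric, $\phi$ is proportional to $P_1$, and the expression becomes $(1-2p)\bigl(g(i)-g(i-1)\bigr)$, where $g$ is the zero-count distribution of an $(n-1)$-bit mutation started from $j$ zeros; the claim then follows from log-concavity (hence unimodality) of $g$ together with the observation that $p\le 1/n$ keeps its mode at or above $j-1$, so that $g$ is increasing on $\{0,\dots,j-1\}$. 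Extending this coupling cleanly to $P_2$, whose per-pattern weight is not geometric so that $\phi$ no longer factors through $P_2$, is the principal remaining obstacle, and I would reconcile it with the term-comparison estimate above to cover the boundary cases.
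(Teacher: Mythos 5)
Your treatment of $p_{i,j}$ is correct and is essentially the paper's own argument: the paper likewise dominates the $k$-th summand of $p_{i,j+1}$ by the $k$-th summand of $p_{i,j}$, using $C_{n-j-1}^{k}\le C_{n-j}^{k}$ together with $C_{j+1}^{i-k}P_1(2k+j+1-i,p)<C_{j}^{i-k}P_1(2k+j-i,p)$, the latter coming from $P_1(\ell+1)/P_1(\ell)=p/(1-p)\le 1/(n-1)$. That half of the lemma is closed by your first route (your combined ratio $\frac{(j-i+b+1)(n-j)}{(j+1)(n-j-b)}$ is just the product of the paper's two factors), and your alternative coupling argument for $P_1$ is a nice, if unnecessary, second proof of the same half.

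The genuine gap is that you never establish the second assertion, the monotonicity of $s_{i,j}$: both of your routes are completed only for $P_1$, and for $P_2$ you explicitly leave the ``principal remaining obstacle'' open. Since the lemma claims both monotonicities, and the $s_{i,j}$ half is exactly what is invoked to verify condition (\ref{conC3}) for $\mathbf{\tilde{S}}$ in (\ref{temp7}) of the proof of Theorem \ref{T6}, the proposal as written does not prove the statement. That said, your diagnosis of where the per-term comparison breaks is precisely right, and it is more careful than the paper, which dismisses this half with a bare ``similarly'': for $j=n-1$, $i=n-2$, $k=0$ the per-term inequality requires $P_2(2)/P_2(1)\le 2/n$, whereas Lemma \ref{L2} only yields $2/(n-1)$. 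Moreover, the deficit is not always rescued by the surplus higher-$k$ terms of $s_{i,j}$: taking $n=3$, $p=1/3$ and $q_m\to 1^{-}$ (so $C_R\to 1/3^{+}$), one gets $P_2(1)\to 4/27$, $P_2(2)\to 4/27$, $P_2(3)\to 3/27$, hence $s_{1,2}=2P_2(1)+P_2(3)\to 11/27$ while $s_{1,3}=3P_2(2)\to 12/27$, so $s_{i,j}$ is \emph{not} decreasing in $j$ even though all stated hypotheses ($p\le 1/n$, $n\ge 3$, $0<q_m,C_R<1$) hold. So the obstacle you flagged cannot be removed by a sharper estimate alone; a correct proof of the $s_{i,j}$ half needs additional restrictions (e.g.\ $q_m$ bounded away from $1$, or larger $n$), and you should not expect to close your sketch without importing such a hypothesis.
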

\begin{proof}
We validate the  monotonicity of $p_{i,j}$ for the $(1+1)EA$, and that of $q_{i,j}$ and $s_{i,j}$ can be confirmed in a similar way.

Let $0\le i< j< n$. By (\ref{TP1}) we know
\begin{align}
 & p_{i,j+1}=\sum_{k=0}^M{C_{n-j-1}^{k}C_{j+1}^{i-k}}P_1(2k+j+1-i,p),\label{temp1}\\
 &  p_{i,j}=   \sum_{k=0}^M{C_{n-j}^{k}C_{j}^{i-k}}P_1(2k+j-i,p),\label{temp2}
\end{align}
where $M={\min \left\{ n-j-1,i \right\}}$. Moreover, (\ref{temp4}) implies that
\begin{align*}
&  \frac{C_{j+1}^{i-k}P_1(2k+j+1-i,p)}{C_{j}^{i-k}P_1(2k+j-i,p)}=\frac{j+1}{(j+1)-(i-k)}\frac{p}{1-p}\le\frac{j+1}{2}\frac{1}{n-1}<1,
\end{align*}
and we know
\begin{equation}\label{temp3}
  {C_{j+1}^{i-k}P_1(2k+j+1-i,p)}<{C_{j}^{i-k}P_1(2k+j-i,p)}.
\end{equation}
Note that
\begin{align}\label{temp14}
& \min \left\{ n-j-1,i \right\}\ge \min \left\{ n-j,i \right\},\quad C_{n-j-1}^{k}<C_{n-j}^{k}.
\end{align}
From (\ref{temp1}), (\ref{temp2}), (\ref{temp3}) and (\ref{temp14}) we conclude that $$p_{i,j+1}<p_{i,j},\quad 0\le i< j< n.$$
Similarly, we can  validate that
$$ s_{i,j+1}<s_{i,j},\quad 0\le i< j< n.$$

In conclusion, $p_{i,j}$ and $s_{i,j}$ are monotonously decreasing in $j$.
\end{proof}

\begin{theorem}\label{T6}
On condition that $p_m=C_Rq_m=p\le \frac{1}{n}$, it holds for the OneMax problem that
\begin{equation*}
(1+1)EA_C \succsim (1+1)EA.
\end{equation*}
\end{theorem}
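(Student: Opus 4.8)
The plan is to invoke Lemma \ref{Th_ES} with the two transition matrices inserted in the correct order, and then to obtain both the expected‑error inequality and the tail‑probability inequality from that single result by feeding it two different admissible weight vectors. Recall that under random initialization the error after $t$ iterations is the bilinear form $e^{[t]}=\mathbf{\tilde e}'\,\mathbf{\tilde R}^{t}\,\mathbf{\tilde q}^{[0]}$, where $\mathbf{\tilde R}$ is the algorithm's transition matrix and $\mathbf{\tilde q}^{[0]}$ its initial distribution.

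First I would fix the correspondence, which is the delicate point. Since outperformance demands that the error of $(1+1)EA_C$ be the \emph{smaller} one, the $(1+1)EA_C$ matrix $\mathbf{\tilde S}$ must play the role of the matrix called $\mathbf{\tilde R}$ in Lemma \ref{Th_ES}, while the $(1+1)EA$ matrix $\mathbf{\tilde P}$ plays the role of its $\mathbf{\tilde S}$. The weight vector is taken to be $\mathbf{\tilde v}=\mathbf{\tilde q}^{[0]}$, whose entries are positive under random initialization, and the error vector is the nondecreasing OneMax level vector $\mathbf{\tilde e}=(e_0,\dots,e_n)'$ with $e_i=i$. With this identification the conclusion $\mathbf{\tilde e}'\mathbf{\tilde S}^{t}\mathbf{\tilde q}^{[0]}\le \mathbf{\tilde e}'\mathbf{\tilde P}^{t}\mathbf{\tilde q}^{[0]}$ is exactly $e^{[t]}_{(1+1)EA_C}\le e^{[t]}_{(1+1)EA}$ for every $t>0$.

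Next I would verify hypotheses (\ref{conC1})–(\ref{conC3}), all of which reduce to results already proved. In the present ordering, (\ref{conC1}) reads $p_{j,j}\ge s_{j,j}$: this follows from the dominance $\mathbf{\tilde S}\succeq\mathbf{\tilde P}$ of Theorem \ref{T4}, because each column of an upper‑triangular stochastic matrix sums to one and the off‑diagonal inequalities $s_{l,j}\ge p_{l,j}$ force $s_{j,j}=1-\sum_{l<j}s_{l,j}\le 1-\sum_{l<j}p_{l,j}=p_{j,j}$. Condition (\ref{conC2}) becomes $\sum_{l=0}^{i-1}(s_{l,j}-p_{l,j})\ge 0$, which holds term by term by the same dominance since every index $l\le i-1<j$ is an off‑diagonal position. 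Condition (\ref{conC3}) becomes $\sum_{l=0}^{i}(p_{l,j-1}-p_{l,j})\ge 0$, which again holds term by term because Lemma \ref{L3} asserts that $p_{i,j}$ is monotonically decreasing in $j$. Applying Lemma \ref{Th_ES} then delivers the error inequality for all $t>0$.

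Finally, for the tail probability I would reuse the identical verification with a different error vector, exactly as in the proof of Lemma \ref{Pro7}. For fixed $0<i<L$, the tail probability $p^{[t]}(e_i)$ equals the expected error of the auxiliary problem whose error vector is $(\underbrace{0,\dots,0}_{i},1,\dots,1)'$, which is still nondecreasing and hence admissible in Lemma \ref{Th_ES}. Crucially, conditions (\ref{conC1})–(\ref{conC3}) constrain the matrices alone and are independent of $\mathbf{\tilde e}$ and $\mathbf{\tilde v}$, so no new work is needed and we obtain $p^{[t]}_{(1+1)EA_C}(e_i)\le p^{[t]}_{(1+1)EA}(e_i)$ for all $t>0$ and all $0<i<L$. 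Combined with the error inequality, Definition \ref{Def2} yields $(1+1)EA_C\succsim (1+1)EA$. The main obstacle is conceptual rather than computational: one must correctly orient the two matrices inside Lemma \ref{Th_ES} — the crossover algorithm supplies the \emph{larger} off‑diagonal (down‑level) probabilities, so it is the ``$\mathbf{\tilde R}$'' of the lemma — and one must notice that the monotonicity hypothesis (\ref{conC3}) falls on the \emph{slower} matrix $\mathbf{\tilde P}$, which is precisely what Lemma \ref{L3} guarantees. Once this index bookkeeping is settled, every inequality is inherited directly from Theorem \ref{T4} and Lemma \ref{L3}, with no residual estimate to grind through.
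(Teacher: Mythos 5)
Your proposal is correct and follows essentially the same route as the paper: it verifies conditions (\ref{conC1})--(\ref{conC3}) via the dominance from Theorem \ref{T4} and the monotonicity from Lemma \ref{L3}, applies Lemma \ref{Th_ES} with $\mathbf{\tilde v}=\mathbf{\tilde q}^{[0]}$, and recovers the tail-probability bound by substituting the $0$--$1$ level vector $\mathbf{\tilde o}_i$ for $\mathbf{\tilde e}$. Your explicit bookkeeping of which matrix plays which role in the lemma (and the column-sum derivation of $s_{j,j}\le p_{j,j}$) is in fact cleaner than the paper's own write-up, but it is not a different argument.
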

\begin{proof}
Given the initial distribution $\mathbf{\tilde{q}}^{[0]}$ and transition matrix $\mathbf{\tilde{R}}$, the level distribution at iteration $t$ is confirmed by
\begin{equation}\label{Dis_t}
\mathbf{\tilde{q}}^{[t]}=\mathbf{\tilde{R}}^t\mathbf{\tilde{q}}^{[0]}.
\end{equation}
Denote
\begin{align*}
   &\mathbf{\tilde{e}}=(e_0,e_1,\dots,e_L)', \quad  \mathbf{\tilde{o}}_i=(\underbrace{0,\dots,0}_{i},1,\dots,1)'.
\end{align*}
By premultiplying (\ref{Dis_t}) with $\mathbf{\tilde{e}}$ and $\mathbf{\tilde{o}}_i$, respectively, we get
\begin{align}
& e^{[t]}=\mathbf{\tilde{e}}'\mathbf{\tilde{R}}^t\mathbf{\tilde{q}^{[0]}},\label{EAE}\\
& p^{[t]}(e_i)=\Pr\{e(\mathbf{x}_t)\}\ge e_i\}=\mathbf{\tilde{o}}_i'\mathbf{\tilde{R}}^t\mathbf{\tilde{q}^{[0]}}.\label{TP}
\end{align}

Meanwhile, By Theorem \ref{T4} we have
\begin{align}
 & q_{j,j}\le s_{j,j} \le p_{j,j},\label{temp5}\\
 & \sum^{i-1}_{l=0} (q_{l,j}-s_{l,j}) \ge 0 ,\quad \sum^{i-1}_{l=0} (s_{l,j}-p_{l,j}) \ge 0 , \quad \forall\,\,i<j,\label{temp6}
\end{align}
and Lemma \ref{L3} implies
\begin{align}\label{temp7}
  \sum^{i}_{l=0} ( s_{l,j-1}- s_{l,j})\ge 0 , \quad \sum^{i}_{l=0} ( p_{l,j-1}- p_{l,j})\ge 0 &&\forall\,\,i<j-1.
\end{align}
Then, (\ref{temp5}), (\ref{temp6}) and (\ref{temp7}) validate satisfaction of conditions (\ref{conC1})-(\ref{conC3}), and by Lemma \ref{Th_ES} we know
\begin{align*}
 &  \mathbf{\tilde{e}}'\mathbf{\tilde{S}}^t\mathbf{\tilde{q}^{[0]}}\le \mathbf{\tilde{e}}'\mathbf{\tilde{P}}^t\mathbf{\tilde{q}^{[0]}}, &&\forall t>0;\\
 & \mathbf{\tilde{o}}_i'\mathbf{\tilde{S}}^t\mathbf{\tilde{q}^{[0]}}\le \mathbf{\tilde{o}}_i'\mathbf{\tilde{P}}^t\mathbf{\tilde{q}^{[0]}}, &&\forall t>0,\,1\le i< n.
\end{align*}
Then, we get the conclusion by Definition \ref{Def2}.
\end{proof}

The above theorem demonstrates that the dominance of transition matrices introduced by the binomial crossover operator leads to the outperformance of $(1+1)EA_C$ on the unimodal problem OneMax.

\section{Comparison of the two EAs on Deceptive and Adaptive Parameter Strategy}\label{SecExpr}
In this section, we show  that the outperformance of  $(1+1)EA_C$ over $(1+1)EA$ may not always hold on Deceptive, and then, propose an adaptive strategy of parameter setting arising from the theoretical analysis.
\subsection{A counterexample for inconsistency between the transition dominance and the algorithm outperformance}

For the Deceptive problem, we present a counterexample to show even if the transition matrix of an EA dominates another EA, we cannot draw that the former EA outperform the later.

\begin{example}\label{Exam}
We construct two artificial Markov chains as the models of two EAs.
Let $EA_\mathcal{R}$ and $EA_\mathcal{S}$ be two EAs staring with an identical initial distribution
$$\mathbf{p}^{[0]}=\left(\frac{1}{n},\frac{1}{n},\dots,\frac{1}{n}\right)^t,$$
and the respective transition matrices are
  \begin{align*}
    \mathbf{\tilde{R}}=\begin{pmatrix}
                         1 & \frac{1}{n^3} &  \frac{2}{n^3} & \dots  &\frac{n}{n^3} \\
                           & 1-\frac{1}{n^3} & \frac{1}{n^2} &  &  \\
                           &  & 1-\frac{1}{n^2}-\frac{2}{n^3} & \ddots  &  \\
                           &  &         & \ddots  &   \frac{n-1}{n^2}\\
                           &  &         &   &  1-\frac{1}{n}\\
                       \end{pmatrix}
   \end{align*}
   and
   \begin{align*}
    \mathbf{\tilde{S}}=\begin{pmatrix}
                         1 & \frac{2}{n^3} &  \frac{4}{n^3} &\dots  &\frac{2n}{n^3} \\
                           & 1-\frac{2}{n^3} & \frac{1}{n^2}+\frac{1}{2n} &  &  \\
                           &  & 1-\frac{n^2+2n+8}{2n^3} &\ddots  &  \\
                           &  &         &\ddots  &   \frac{n-1}{n^2}+\frac{n-1}{2n}\\
                           &  &         &   &  1-\frac{n^2+n+2}{2n^2}\\
                       \end{pmatrix}.\\
  \end{align*}
\end{example}

Obviously, it holds $\mathbf{\tilde{S}}\succeq \mathbf{\tilde{R}}$.
Through computer simulation, we get the  curve  of  EAE difference of the two EAs  in Figure~\ref{Test}(a) and the curve of TPs  difference  of the two EAs  in Figure~\ref{Test}(b). From Figure~\ref{Test}(b), it is clear that  $EA_\mathcal{R}$ does not always outperform $EA_\mathcal{S}$ because the difference of TPs is negative at the early stage of the iteration process but later positive.

\begin{figure}[!t]
\centering
\begin{minipage}[c]{0.48\textwidth}
\centering
\includegraphics[width=2.5in]{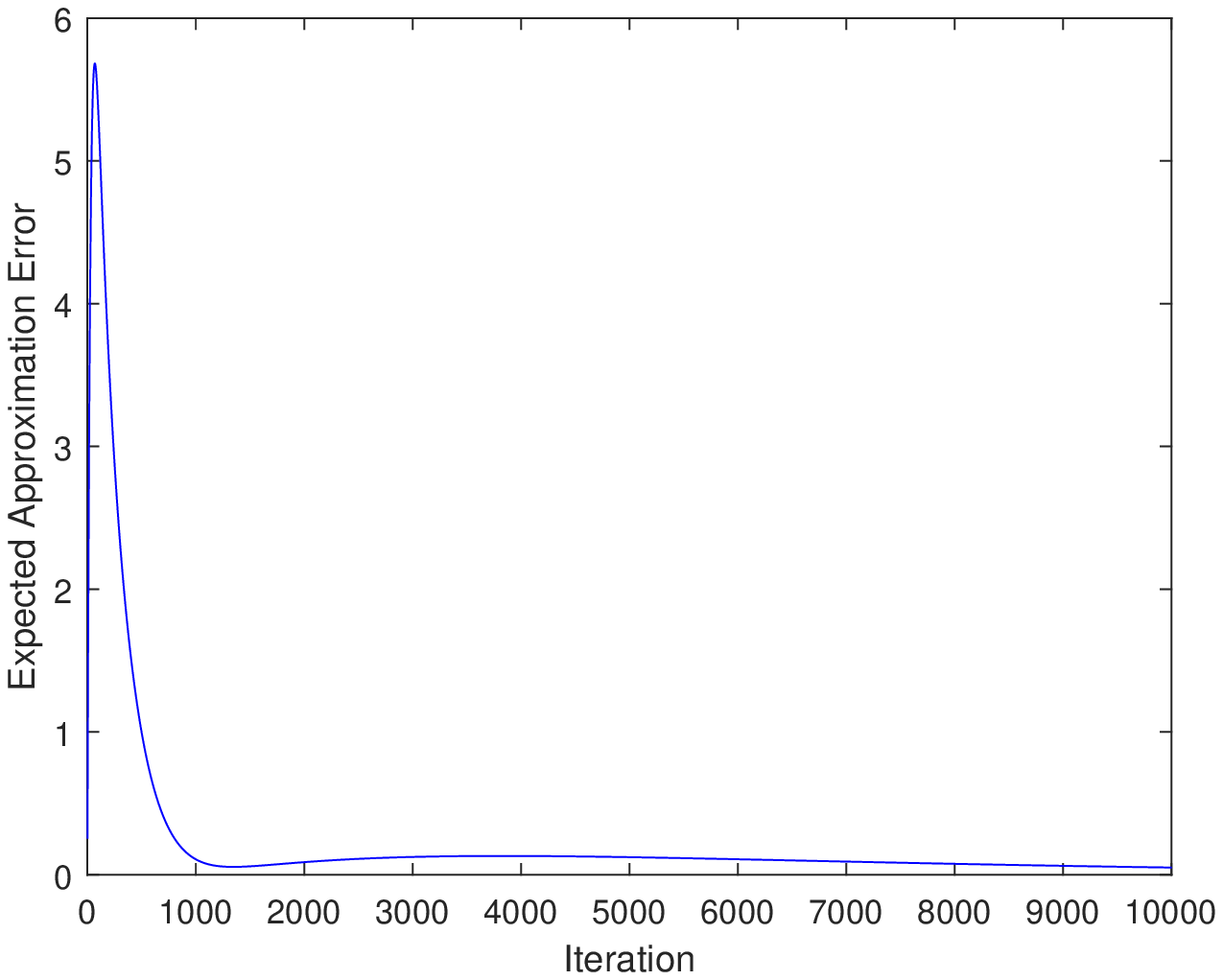}
\end{minipage}
\hspace{0.02\textwidth}
\begin{minipage}[c]{0.48\textwidth}
\centering
\includegraphics[width=2.5in]{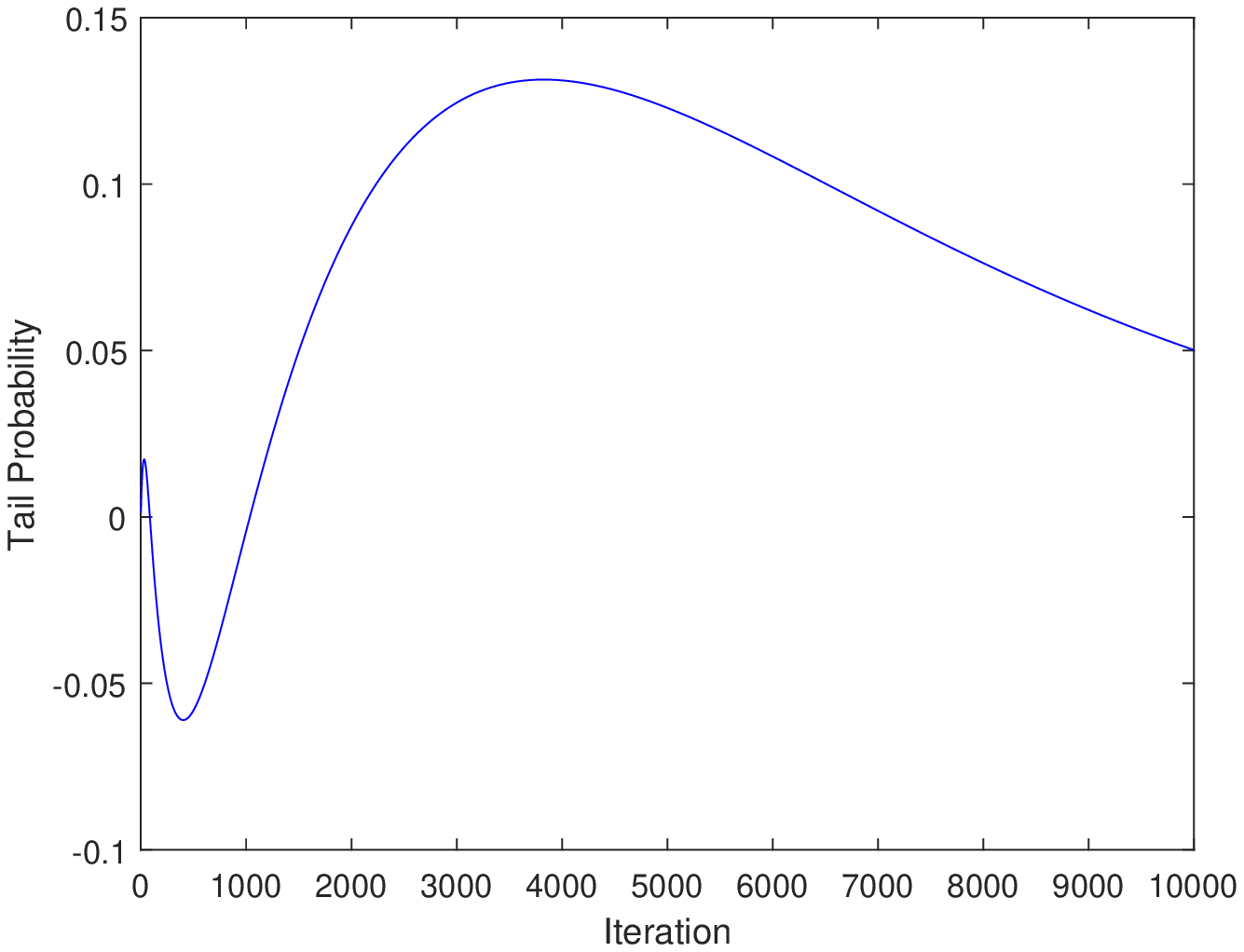}
\end{minipage}
\caption{Simulation results of the difference of EAEs and TPs for the artificial case. }
\label{Test}
\end{figure}

\subsection{Numerical comparison for the Two EAs on Deceptive}

Now we turn  to discuss $(1+1)EA$ and $(1+1)EA_C$ on Deceptive. We demonstrate $(1+1)EA_C$ may not outperform $(1+1)EA$ over all generations although the transition matrix of $(1+1)EA_C$ dominates that of $(1+1)EA$.

\begin{example}
In  $(1+1)EA$ and  $(1+1)EA_C$, set  $$ p_m=C_Rq_m=\frac{1}{n}.$$
For $(1+1)EA_C$, let $q_m=\frac{1}{2}$, $C_R=\frac{2}{n}$. The numerical simulation results of EAEs and TPS for 5000 independent runs are depicted in Figure \ref{fig2}.
It is shown that when $n\ge 9$, both EAEs and TPS of $(1+1)EA$ could be smaller than those of  $(1+1)EA_C$. This indicates that the dominance of transition matrix does not always guarantee the outperformance of the corresponding algorithm.
\end{example}

\begin{figure}[!t]
\centering
\begin{minipage}[c]{0.48\textwidth}
\centering
\includegraphics[width=2.5in]{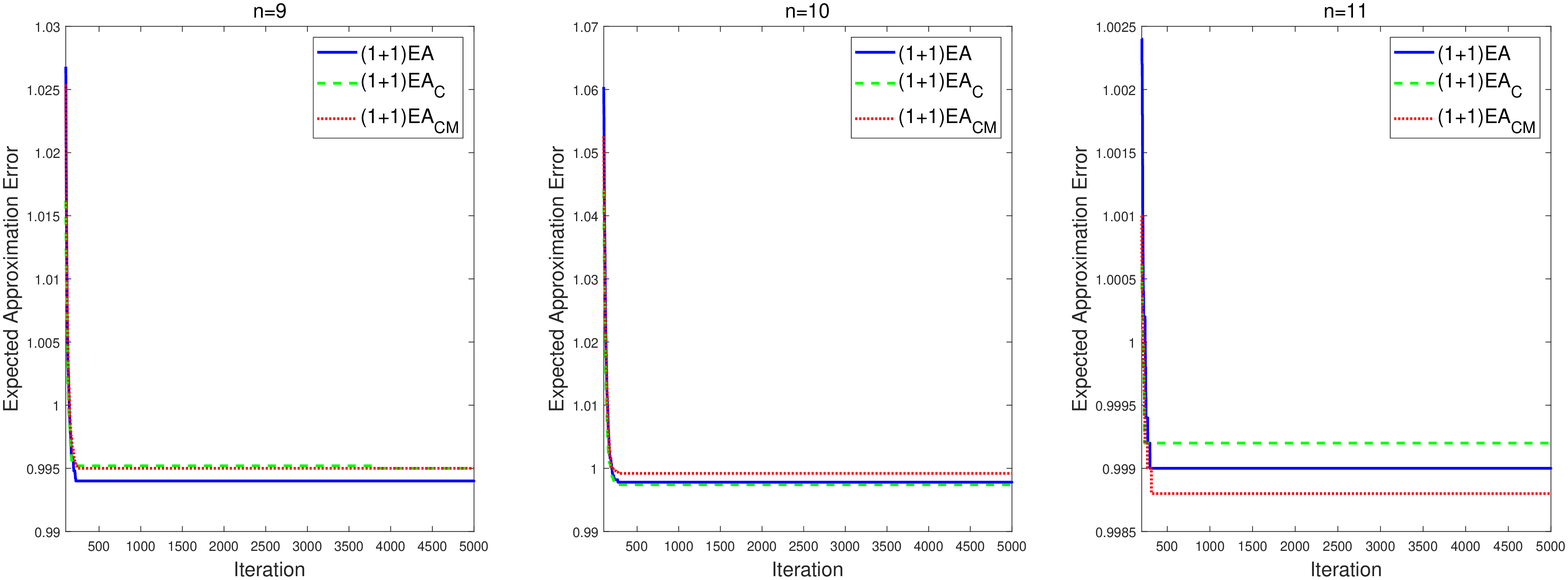}
\end{minipage}
\hspace{0.02\textwidth}
\begin{minipage}[c]{0.48\textwidth}
\centering
\includegraphics[width=2.5in]{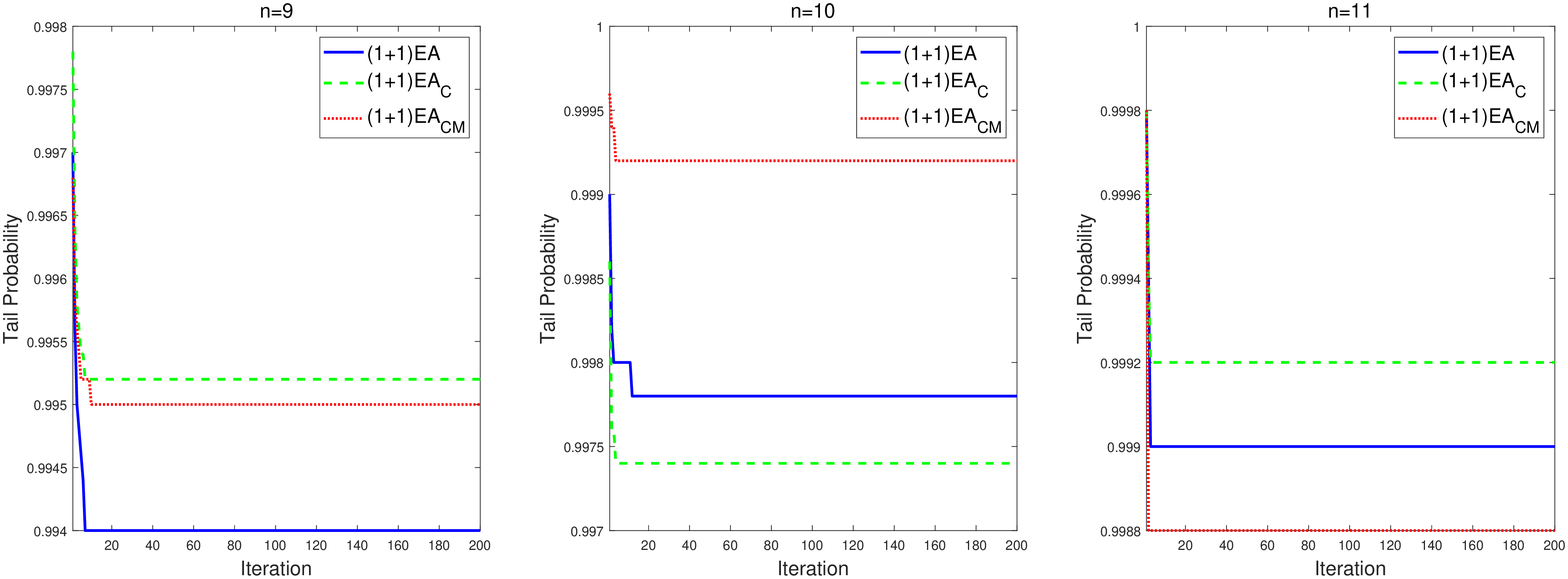}
\end{minipage}
\caption{Numerical comparison on expected approximation errors (EAEs) and tail probabilities (TPs) of $(1+1)EA$ and $(1+1)EA_C$ applied to the Deceptive problem, where $n$ refers to the problem dimension. }
\label{fig2}
\end{figure}

With $p_m=C_Rq_m=p\le \frac{1}{n}$, although the binomial crossover leads to transition dominance of the $(1+1)EA_C$ over the $(1+1)EA$, the enhancement of exploitation plays a governing role in the iteration process. Thus, the imbalance of exploration and exploitation leads to a poor performance of $(1+1)EA_C$ on some stage of the iteration process.

\subsection{Comparisons on the Probabilities to Transfer from Non-optimal Statuses to the Optimal Status}
As shown in the previous two counterexamples, the outperformance of $(1+1)EA){MC}$ cannot be drawn from dominance of transition matrices. To  enhance the performance of an EA, adaptive parameter settings should be incorporated to  improve the exploration ability of an  EA on Deceptive.

Local exploitation could result in convergence to the local optimal solution, global convergence of EAs on Deceptive  is principally attributed to the direct transition from level $j$ to level $0$, which is quantified by the transition probability $r_{0,j}$. Thus, we investigate the impact of  binomial crossover  on the transition probability $r_{0,j}$, and accordingly, arrive at the strategies for adaptive regulations of the mutation rate and the crossover rate.

In the following, we first compare $p_{0,j}$ and $s_{0,j}$ by investigating their monotonicity, and then, get reasonable adaptive strategies that improve the performance of EAs on the Deceptive problem.

Substituting (\ref{P1}) and (\ref{P3}) to (\ref{TP11}) and (\ref{TP33}), respectively, we have
\begin{align}
 &  p_{0,j}  =P_1(n-j+1,p_m)=(p_m)^{n-j+1}(1-p_m)^{j-1}, \label{TP111}\\
 &  s_{0,j}  =P_{3}(n-j+1,C_R,q_m)\nonumber\\
 &  = \frac{1}{n}\left[(j-1)(1-C_R)+nC_R(1-q_m)\right]C_R^{n-j}(q_m)^{n-j+1}\left(1-q_mC_R\right)^{j-2}. \label{TP333}
\end{align}

We first investigate the maximum values of $p_{0,j}$ to get the ideal performance of $(1+1)EA$  on the Deceptive problem.

\begin{theorem}\label{Pro1}
While
  \begin{align}
 & p^{\star}_m=\frac{n-j+1}{n},\label{temp10}
\end{align}
$p_{0,j}$ gets its maximum values
\begin{equation}
 p_{0,j}^{max}=\left(\frac{n-j+1}{n}\right)^{n-j+1}\left(\frac{j-1}{n}\right)^{j-1}.
\end{equation}
\end{theorem}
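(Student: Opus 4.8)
The plan is to treat $p_{0,j}$ in equation (\ref{TP111}) as a function of the single variable $p_m\in(0,1)$, namely $g(p_m)=(p_m)^{n-j+1}(1-p_m)^{j-1}$, and to maximize it by elementary calculus. First I would abbreviate $a=n-j+1$ and $b=j-1$, so that $g(p_m)=p_m^{a}(1-p_m)^{b}$, which is the kernel of a Beta density and hence has a single interior peak. Since $g$ is strictly positive on $(0,1)$, it is convenient to maximize $\ln g(p_m)=a\ln p_m+b\ln(1-p_m)$ instead, which has the same maximizer.

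Next I would compute the logarithmic derivative $\frac{d}{dp_m}\ln g=\frac{a}{p_m}-\frac{b}{1-p_m}$ and set it to zero. Solving $a(1-p_m)=b\,p_m$ gives the unique interior critical point $p_m^{\star}=\frac{a}{a+b}=\frac{n-j+1}{n}$, which recovers (\ref{temp10}). To confirm that this is a maximum rather than a minimum or an inflection, I would observe that the second derivative $-\frac{a}{p_m^{2}}-\frac{b}{(1-p_m)^{2}}$ is strictly negative throughout $(0,1)$, so $\ln g$ is strictly concave and $p_m^{\star}$ is its global maximizer; equivalently, $g$ vanishes at both endpoints $p_m=0$ and $p_m=1$ while being positive in between.

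Finally I would substitute $p_m^{\star}$ back into $g$. Using $1-p_m^{\star}=\frac{b}{a+b}=\frac{j-1}{n}$, I obtain $g(p_m^{\star})=\left(\frac{n-j+1}{n}\right)^{n-j+1}\left(\frac{j-1}{n}\right)^{j-1}$, which is exactly the claimed value of $p_{0,j}^{max}$. The argument is routine, so there is no genuine obstacle; the only points demanding mild care are the degenerate indices. When $j=1$ we have $b=0$ and $g=p_m^{n}$ is increasing, with the formula correctly returning $p_m^{\star}=1$ and maximum $1$ under the convention $0^0=1$, whereas for $2\le j\le n$ the critical point lies strictly inside $(0,1)$; in particular $j=n$ gives $p_m^{\star}=1/n$, which is consistent with the standing assumption $p_m\le 1/n$ used elsewhere in the analysis.
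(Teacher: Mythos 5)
Your proposal is correct and follows essentially the same route as the paper: both treat $p_{0,j}=(p_m)^{n-j+1}(1-p_m)^{j-1}$ as a single-variable function and locate the critical point $p_m^{\star}=\frac{n-j+1}{n}$ by setting the derivative to zero (the paper differentiates directly rather than via the logarithm, but that is an immaterial difference). Your added concavity check and remarks on the boundary cases $j=1$ and $j=n$ only make the argument slightly more complete than the paper's.
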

\begin{proof}
By (\ref{TP111}), we know
\begin{align*}
{\frac{\partial}{\partial(p_m)}p_{0,j}=(n-j+1-np_m)p_m^{n-j}\left(1-p_m\right)^{j-2}}.
\end{align*}
While $  p_m=\frac{n-j+1}{n}$,
$p_{0,j}$ gets its maximum value
\begin{equation*}
     p_{0,j}^{max}=P_1(n-j+1,\frac{n-j+1}{n})=\left(\frac{n-j+1}{n}\right)^{n-j+1}\left(\frac{j-1}{n}\right)^{j-1}.
\end{equation*}
\end{proof}

Influence of the binomial crossover on $s_{0,j}$ is investigated  on condition that $p_m=q_m$. By regulating the crossover rate $C_R$, we can compare $p_{0,j}$ with the maximum value $s_{0,j}^{max}$ of $s_{0,j}$.

\begin{theorem}\label{Pro5}
On condition that $p_m=q_m$, the following results hold.
\begin{enumerate}
  \item $p_{0,1}= s^{max}_{0,1}$.
  \item If $q_m>\frac{n-1}{n}$, $p_{0,2}< s^{max}_{0,2}$; otherwise, $p_{0,2}= s^{max}_{0,2}$.
  \item $\forall\,\, j\in\{3,\dots,n-1\}$, $p_{0,j}\le s^{max}_{0,j}$ if $q_m>\frac{n-j}{n-1}$; otherwise, $s^{max}_{0,j}=p_{0,j}$.
  \item if $q_m>\frac{1}{n}$, $p_{0,n}< s^{max}_{0,n}$; otherwise, $s^{max}_{0,n}=p_{0,n}$.
\end{enumerate}
\end{theorem}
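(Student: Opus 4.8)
The plan is to reduce all four comparisons to a single one-variable maximization. Fix the common rate $p_m=q_m=:q$, so that by (\ref{TP111}) we have $p_{0,j}=q^{n-j+1}(1-q)^{j-1}$, and regard the right-hand side of (\ref{TP333}) as a function $\phi_j(C_R)$ of the crossover rate alone on $(0,1]$, with $q$ and $j$ held fixed. The observation that organizes everything is that at $C_R=1$ the binomial crossover copies the whole mutant, so $(1+1)EA_C$ collapses to $(1+1)EA$; concretely the affine bracket in (\ref{TP333}) equals $n(1-q)$ at $C_R=1$ and the remaining factors reduce to $q^{n-j+1}(1-q)^{j-2}$, whence $\phi_j(1)=q^{n-j+1}(1-q)^{j-1}=p_{0,j}$. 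Thus $p_{0,j}$ is always attained at the boundary and $s^{max}_{0,j}\ge p_{0,j}$ for free; the whole theorem is the question of whether the maximizer of $\phi_j$ sits at $C_R=1$ (equality) or in the interior (strict inequality).

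To decide this I would test the slope of $\phi_j$ at the boundary. Writing $\log\phi_j$ as the sum of $\log A(C_R)$, where $A(C_R)$ is the affine bracket of (\ref{TP333}), the term $(n-j)\log C_R$, and the term $(j-2)\log(1-qC_R)$, the logarithmic derivative is a sum of three elementary fractions; evaluating at $C_R=1$, where $A=n(1-q)$, and clearing the common denominator $n(1-q)$ leaves a numerator that is \emph{affine in} $q$. Its sign governs everything: if $\phi_j'(1)<0$ the function is still rising just inside the boundary, so the genuine maximum is interior and $s^{max}_{0,j}>p_{0,j}$; if $\phi_j'(1)\ge 0$ the boundary is the maximizer and $s^{max}_{0,j}=p_{0,j}$. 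Setting that affine numerator to zero yields the critical value $q^\star_j$ separating the two regimes, which is the threshold reported in the corresponding item.

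The split into four items is forced by the degeneration of the two exponents in (\ref{TP333}). For $j=1$ the affine bracket loses its constant contribution, and $\phi_1$ turns out to be monotone increasing on $(0,1)$, so its supremum is always the boundary value, giving the unconditional equality of item~1. For $j=2$ the factor $(1-qC_R)^{j-2}$ disappears, removing the third fraction from the logarithmic derivative; for $j=n$ the factor $C_R^{n-j}$ disappears, removing the second fraction; and for $3\le j\le n-1$ both survive. Each situation produces a different affine numerator and hence a distinct threshold, matching items~2--4. I would therefore carry out the slope computation once in the generic case and then specialize, reading off in each case whether the boundary slope is negative (strict) or nonnegative (equality).

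The step I expect to be the real obstacle is \emph{not} the boundary-slope algebra but the justification that this single test is decisive, i.e. that $\phi_j$ is unimodal on $(0,1]$, so that a negative slope at $C_R=1$ truly forces an interior maximum above $p_{0,j}$ while a nonnegative slope forces monotone increase up to the boundary. To secure this I would show that $\phi_j'$ vanishes at most once on $(0,1)$: after multiplying the three-term logarithmic derivative through by the positive quantity $C_R(1-qC_R)A(C_R)$ one obtains a low-degree polynomial in $C_R$ whose number of sign changes can be controlled, ruling out spurious interior local maxima. Establishing this unimodality, together with carefully tracking which fraction drops out in each degenerate case, is where the bulk of the care lies; the comparison with $p_{0,j}$ and the extraction of the thresholds then follow at once.
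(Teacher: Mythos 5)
Your strategy---evaluate $\phi_j$ at the boundary $C_R=1$ to recover $p_{0,j}$, then decide between boundary and interior maximizer by the sign of the boundary slope, backed by a unimodality argument---is sound and genuinely different from the paper's proof. The paper does not test the boundary slope; for each $j$ it computes $\partial s_{0,j}/\partial C_R$ (for $3\le j\le n-1$ after splitting $s_{0,j}$ into a positive combination of the two factors $I_1=C_R^{n-j}(1-q_mC_R)^{j-1}$ and $I_2=C_R^{n-j+1}(1-q_mC_R)^{j-2}$) and determines the monotone regions over all of $(0,1]$, thereby locating the maximizer $C_R^{\star}$ explicitly; those explicit maximizers are what feed the adaptive parameter strategy in the next subsection, and your route would not deliver them. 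What your approach buys is a uniform treatment of all four items and, for the strict direction, a cleaner argument: a negative slope at $C_R=1$ immediately produces a nearby $C_R$ with $\phi_j(C_R)>p_{0,j}$, with no unimodality needed. It also repairs a soft spot in the paper's middle case, where the claimed maximizer interval $\left(\tfrac{n-j}{(n-1)q_m},1\right]$ contains $1$ and so does not by itself yield strictness. The unimodality you flag as the main obstacle is in fact immediate: $\log\phi_j$ is the sum of the logarithm of a positive affine function and nonnegative multiples of $\log C_R$ and $\log(1-q_mC_R)$, hence concave, so $(\log\phi_j)'$ is decreasing and vanishes at most once. No root counting of the cleared-denominator quadratic is needed (and that quadratic could a priori have two roots in $(0,1)$, so ``controlling sign changes'' is not automatic by degree alone).

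Two computational caveats before you trust the thresholds that fall out. First, the displayed formula (\ref{TP333}) is inconsistent with (\ref{P3}): substituting $l=n-j+1$ into (\ref{P3}) gives the bracket $(n-j+1)+(j-1)C_R-nq_mC_R$, whereas (\ref{TP333}) prints $(j-1)+(n-j+1)C_R-nq_mC_R$. The two agree at $C_R=1$ (both equal $n(1-q_m)$, so your boundary-value identity survives) but have different derivatives there. The paper's own derivative computations use the version coming from (\ref{P3}), and so must you: with the printed bracket the boundary-slope zero for $j=n$ comes out at $\tfrac{1}{n(n-1)}$ rather than the stated $\tfrac{1}{n}$. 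Second, with the correct bracket the boundary-slope zero is $q^{\star}_j=\bigl((j-1)+n(n-j)\bigr)/\bigl(n(n-1)\bigr)$, which reproduces $\tfrac{n-1}{n}$ for $j=2$ and $\tfrac{1}{n}$ for $j=n$, but for $3\le j\le n-1$ it is strictly larger than the theorem's $\tfrac{n-j}{n-1}$. That is not a contradiction---item 3 only asserts the (trivially true) inequality $p_{0,j}\le s^{max}_{0,j}$ above its threshold, and your test gives equality on the larger set $q_m\le q^{\star}_j$, which contains $q_m\le\tfrac{n-j}{n-1}$---but you should not expect the slope computation to ``read off'' the printed threshold in item 3; you would be proving a sharper statement that implies it.
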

\begin{proof}
Note that $(1+1)EA_C$ degrades to $(1+1)EA$ when $C_R=1$. Then, if the maximum value $s_{0,j}^{max}$ of $s_{0,j}$ is obtained by setting $C_R=1$, we have $s_{0,j}^{max}=p_{0,j}$; otherwise, it holds $s_{0,j}^{max}>p_{0,j}$.
\begin{enumerate}
\item For the case that $\boldsymbol{j=1}$,  equation (\ref{TP333}) implies
\begin{equation*}
  {{s}_{0,1}}=q_{m}^{n}{{\left( C_R \right)}^{n-1}}.
\end{equation*}
Obviously, $s_{0,1}$ is monotonously increasing in $C_R$. It gets the maximum value while $C_R^{\star}=1$. Then, by (\ref{TP111}) we have $s^{max}_{0,1}= p_{0,1}$.

\item While $\boldsymbol{j=2}$, by (\ref{TP333}) we have
\begin{equation*}
  \frac{\partial s_{0,2}}{\partial C_R}=\frac{n-1}{n}q_{m}^{n-1}{{\left( C_R \right)}^{n-3}}\left( n-2\text{+}\left( 1-n{{q}_{m}} \right)C_R \right).
\end{equation*}

\begin{itemize}
\item If $0<{{q}_{m}}\le \frac{n-1}{n}$, ${s}_{0,2}$ is  monotonously increasing in $C_R$, and gets  its maximum value while $C_R^{\star}=1$. For this case,  we know $s^{max}_{0,2}= p_{0,2}$.
\item While $\frac{n-1}{n}<{q}_{m}<1$, ${s}_{0,2}$ gets  its maximum value $s_{0,2}^{max}$ by setting
\begin{equation} C_R^{\star}=\frac{n-2}{nq_m-1}.\end{equation}
Then, we have $s_{0,2}^{max}>p_{0,2}$.
\end{itemize}
\item For the case that $\boldsymbol{3\le j\le n-1}$,  we denote
\begin{equation*}
  {s}_{0,j}=\frac{n-j+1}{n}q_{m}^{n-j+1}I_1+\frac{\left( j-1 \right)\left( 1-{{q}_{m}} \right)}{n}q_{m}^{n-j+1}I_2,
\end{equation*}
where
\begin{align*}
   & I_1={{\left( C_R \right)}^{n-j}}{{\left( 1-{{q}_{m}}C_R \right)}^{j-1}}, \\
   & I_2={{\left( C_R \right)}^{n-j+1}}{{\left( 1-{{q}_{m}}C_R \right)}^{j-2}}.
\end{align*}
Then,
\begin{equation*}
\frac{\partial I_1}{\partial C_R}={{\left( C_R \right)}^{n-j-1}}{{\left( 1-{{q}_{m}}C_R \right)}^{j-2}}\left( n-j-\left( n-1 \right){{q}_{m}}C_R \right),
\end{equation*}
\begin{equation*}
  \frac{\partial I_2}{\partial C_R}={{\left( C_R \right)}^{n-j}}{{\left( 1-\frac{C_R}{n} \right)}^{j-3}}\left( n-j+1-\left( n-1 \right){{q}_{m}}C_R \right).
\end{equation*}
\begin{itemize}
  \item While $0<q_m\le\frac{n-j}{n-1}$, both $I_1$ and $I_2$ are monotonously increasing in $C_R$. For this case, $s_{0,j}$ gets its maximum value when $C_R^{\star}=1$, and we have $s^{max}_{0,j}= p_{0,j}$.
  \item If $\frac{n-j+1}{n-1}\le q_m\le 1$,  $I_1$ gets its maximum value when $C_R=\frac{n-j}{(n-1)q_m}$, and $I_2$ gets its maximum value when $C_R=\frac{n-j+1}{(n-1)q_m}$. Then, $s_{0,j}$ could get its maximum value $s_{0,j}^{max}$ at some
      \begin{equation}\label{temp8}
       F^{\star}_R\in \left(\frac{n-j}{(n-1)q_m},\frac{n-j+1}{(n-1)q_m}\right).\end{equation}
      Then, we kmow $s^{max}_{0,j}> p_{0,j}$.
  \item If $\frac{n-j}{n-1}< q_m< \frac{n-j+1}{n-1}$, $I_1$ gets its maximum value when $C_R=\frac{n-j}{(n-1)q_m}$,  and $I_2$ is monotonously increasing in $C_R$. Then, $s_{0,j}$ could get its maximum value $s_{0,j}^{max}$ at some
      \begin{equation}\label{temp15} F^{\star}_R\in \left(\frac{n-j}{(n-1)q_m},1\right].\end{equation}
      Then,
      $s_{0,j}^{max}>p_{0,j}$.
\end{itemize}

\item While {$\boldsymbol{j=n}$}, equation (\ref{TP333}) implies that
\begin{align*}
\frac{\partial s_{0,n}}{\partial C_R}=\left( n-1 \right){{\left( 1-{{q}_{m}}C_R \right)}^{n-3}}\left( 1-2{{q}_{m}}-\left( n-1-n{{q}_{m}} \right){{q}_{m}}C_R \right).
\end{align*}
Denoting
\begin{equation*}g\left(q_m, C_R \right)=1-2{{q}_{m}}-\left( n-1-n{{q}_{m}} \right){{q}_{m}}C_R,\end{equation*}
we can confirm the sign of ${\partial s_{0,n}}/{\partial C_R}$  by considering
$$\frac{\partial}{\partial C_R}g\left(q_m, C_R \right)=-\left( n-1-n{{q}_{m}} \right){{q}_{m}}.$$
\begin{itemize}
\item While $0<{{q}_{m}}\le \frac{n-1}{n}$, $g\left(q_m, C_R \right)$ is monotonously decreasing in $C_R$, and its minimum value is
    $$g\left(q_m, 1 \right)=\left( n{{q}_{m}}-1 \right)\left( {{q}_{m}}-1 \right).$$
    The maximum value of $g\left(q_m, C_R \right)$ is
    $$g\left(q_m, 0 \right)=1-2q_m.$$
    \begin{enumerate}
    \item If $0<{{q}_{m}}\le \frac{1}{n}$, we have
    $$g\left(q_m, C_R \right)\ge g\left(q_m, 1 \right)>0.$$
    Thus, $\frac{\partial s_{0,n}}{\partial C_R}\ge 0$, and ${{s}_{0,n}}$ is monotonously increasing in $C_R$. For this case, $s_{0,n}$ get its maximum value when $C_R^{\star}=1$, and we have $s_{0,n}^{max}=p_{0,n}$.
    \item    If $\frac{1}{n}<{{q}_{m}} \le \frac{1}{2}$, ${{s}_{0,n}}$ gets the maximum value $s_{0,n}^{max}$ when
    $$ C_R^{\star}=\frac{1-2q_m}{q_m(n-1-nq_m)}.$$
    Thus, $s_{0,n}^{max}>p_{0,n}$.
    \item If $\frac{1}{2}<{{q}_{m}} \le \frac{n-1}{n}$, $g\left(q_m, 0 \right)<0$, and then, ${{s}_{0,n}}$ is monotonously decreasing in $C_R$. Then, its maximum value is obtained by setting $C_R^{\star}=0$. Then, we know $s^{max}_{0,n}>p_{0,n}$.
\end{enumerate}
    \item While $ \frac{n-1}{n}< {{q}_{m}} \le 1$, $g\left(q_m, C_R \right)$ is monotonously increasing in $C_R$, and its maximum value
    $$g\left(q_m, 1 \right)=\left( n{{q}_{m}}-1 \right)\left( {{q}_{m}}-1 \right)<0.$$
    Then, ${{s}_{0,n}}$ is monotonously decreasing in $C_R$, and its maximum value is obtained by setting $C_R^{\star}=0$. Then, $s^{max}_{0,n}>p_{0,n}$.
\end{itemize}
In summary, $s^{max}_{0,n}>p_{0,n}$ while $q_m>\frac{1}{n}$; otherwise, $s^{max}_{0,n}=p_{0,n}$.
\end{enumerate}
\end{proof}

Theorems \ref{Pro1} and \ref{Pro5} presents the ``best'' settings to maximize the transition probabilities from non-optimal statues to the optimal level. Unfortunately, such results are  not available if the global optima solution is unknown.

\subsection{Parameter Adaptive Strategy to Enhance Exploration of EAs}
We proposes a parameter Adaptive Strategy based on Hamming distance. Since the level index $j$ is equal to the Hamming distance between $\mathbf{x}$ and $\mathbf{x}^*$, improvement of level index $j$ is in deed equal to reduction of the Hamming distance obtained by replacing $\mathbf{x}$ with $\mathbf{y}$. Then, while the local exploitation leads to transition from level $j$ to a non-optimal level $i$, the practically adaptive strategy of parameters can be obtained according to the Hamming distance between $\mathbf{x}$ and $\mathbf{y}$.

For the Deceptive problem, consider two solutions $\mathbf{x}$ and $\mathbf{y}$ such that $f(\mathbf{y})>f(\mathbf{x})$, and denote their levels by $j$ and $i$, respectively. When the $(1+1)EA$ is located at the solution $\mathbf{x}$, equation (\ref{temp10}) implies that the ``best'' setting of mutation rate is $p^{\star}_m=\frac{n-j+1}{n}$. While the promising solution $\mathbf y$ is generated, the level transfers from $j$ to $i$, and the ``best'' setting change to $p^{\star}_m=\frac{n-i+1}{n}$. Let $H(\mathbf{x},\mathbf{y})$ denote the Hamming distance between $\mathbf{x}$ and $\mathbf{y}$. Noting that $H(\mathbf{x},\mathbf{y})\ge j-i$, we know the difference of ``best'' parameter settings is bounded from above by $\frac{H(\mathbf{x},\mathbf{y})}{n}$. Accordingly, the mutation rate of $(1+1)EA$ can be updated to
\begin{equation}\label{temp16}
  p^{\prime}_m=p_m+\frac{H(\mathbf{x},\mathbf{y})}{n}.
\end{equation}

For the $(1+1)EA_C$,  the parameter $q_m$ is adapted  using the  strategy consistent to that of $p_m$ to focus on influence of $C_R$. That is,
\begin{equation}\label{temp13}
  q^{\prime}_m=q_m+\frac{H(\mathbf{x},\mathbf{y})}{n}.
\end{equation}
Since $s_{0,j}$ demonstrates different monotonicity for varied levels, one cannot get an identical strategy for adaptive setting of $C_R$. As a compromise, we would like to consider the case that $3\le j\le n-1$, which is obtained by random initialization with overwhelming probability.

According to the proof of Theorem \ref{Pro5}, we know $C_R$ should be set as great as possible for the case $q_m\in(0,\frac{n-j}{n-1}]$; while $q_m\in(\frac{n-j}{n-1}, 1]$,  $C_R^{\star}$ is located in intervals whose boundary values are $\frac{n-j}{(n-1)q_m}$ and $\frac{n-j+1}{(n-1)q_m}$, given by (\ref{temp8}) and (\ref{temp15}), respectively. Then, while $q_m$ is updated by (\ref{temp13}), the update strategy of $C_R$ can be confirmed to satisfying that
$$ F^{\prime}_Rq^{\prime}_m=C_Rq_m+\frac{H(\mathbf{x},\mathbf{y})}{n-1}.$$
Accordingly, the adaptive setting of $C_R$ could be

\begin{equation}\label{temp18}
 F^{\prime}_R=\left(C_Rq_m+\frac{H(\mathbf{x},\mathbf{y})}{n-1}\right)/q^{\prime}_m,
\end{equation}
where $q'_m$ is updated by (\ref{temp13}).

To demonstrate the promising function of the adaptive update strategy, we incorporate it to  $(1+1)EA$ and $(1+1)EA_C$ to get the adaptive variant of these algorithms, and test their performance on the 12-20 dimensional Deceptive problems.
Parameters of EAs are initialized by (\ref{ParaSetting}), and adapted according to (\ref{temp16}), (\ref{temp13}) and (\ref{temp18}), respectively.

Since the adaptive strategy decreases stability of performances to a large extent, numerical simulation of the tail probability is implemented by 10,000 independent runs. To investigate the sensitivity of the adaptive strategy on initial values of $q_m$, the mutation rate $q_m$ in $(1+1)EA_C$ is initialized with values $\frac{1}{\sqrt{n}}$, $\frac{3}{2\sqrt{n}}$ and $\frac{2}{\sqrt{n}}$, where the three variants are denoted by  $(1+1)EA_C^1$, $(1+1)EA_C^2$ and $(1+1)EA_C^3$, respectively.

The converging curves of averaged TPs are illustrated in Figure \ref{CompDecAdaptive}. Compared to the EAs with fixed parameters during the evolution process, the performance of adaptive EAs has significantly been improved. Meanwhile, the outperformance of  $(1+1)EA_C$ introduced by the binomial crossover is greatly enhanced by the adaptive strategies, too.

\begin{figure*}[ht]
  \centering
  \includegraphics[width=7in]{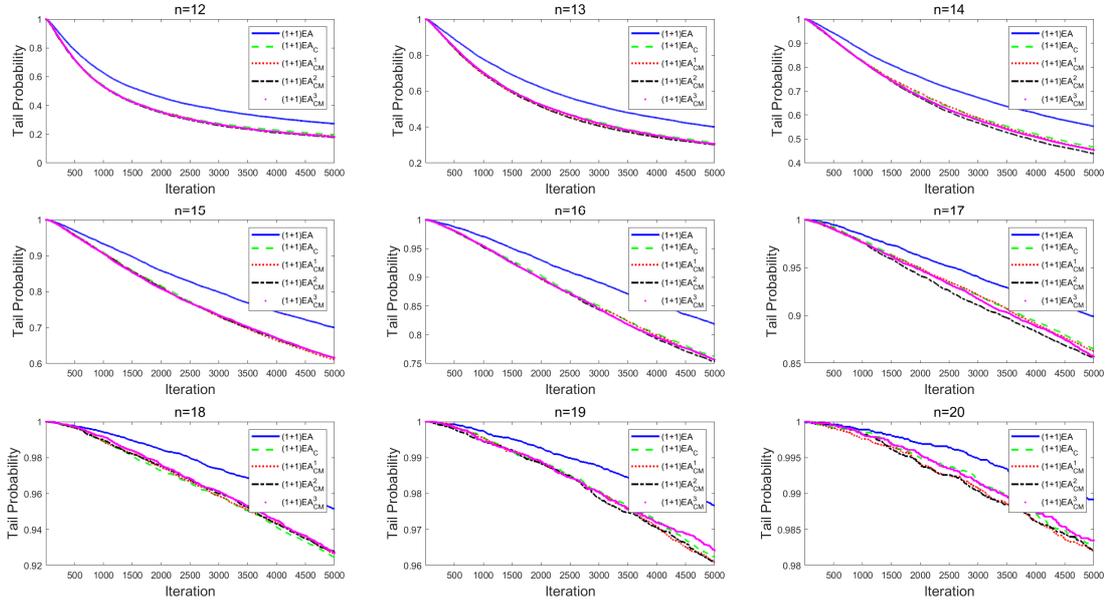}
  \caption{Numerical comparison on tail probabilities (TPs) of adaptive $(1+1)EA$, $(1+1)EA_C$ and $(1+1)EA_C$ applied to the Deceptive problem, where $n$ is the problem dimension. The $(1+1)EA_C^1$, $(1+1)EA_C^2$ and $(1+1)EA_C^3$ are three variants of $(1+1)EA_C$ with $q_m$ initialized as $\frac{1}{\sqrt n}$, $\frac{3}{2\sqrt n}$ and $\frac{2}{\sqrt n}$, respectively.}\label{CompDecAdaptive}
\end{figure*}

\section{Conclusions}\label{SecCon}
Under the framework of fixed budget analysis, we conduct a pioneering analysis of the influence of binomial crossover on the approximation error of EAs. The performance of an EA after running finite generations is measured by two metrics: the expected value of the approximation error and the error tail probability.
Using the two metrics, we make a case study of  comparing the approximation error of $(1+1)EA$  and $(1+1)EA_C$ with binomial crossover.

Starting from the comparison on probability of flipping ``{\it l preferred bits}'', it is proven that under proper conditions, incorporation of  binomial crossover leads to dominance of transition probabilities, that is, the  probability of transferring to any promising status is improved. Accordingly, the asymptotic performance of $(1+1)EA_C$ is superior to that of $(1+1)EA$.

 It is found that the dominance of transition probability guarantees that $(1+1)EA_C$ outperforms $(1+1)EA$ on OneMax in terms of both expected approximation error and tail probability. However, this dominance does leads to the outperformance on Deceptive.  This means that using binomial crossover may reduce the approximation error on some problems   but   not  on other problems.

For Deceptive, an adaptive strategy of parameter setting is proposed based on the monotonicity analysis of transition probabilities. Numerical simulations demonstrate that it can significantly improve the  exploration ability of $(1+1)EA_C$ and $(1+1)EA$, and superiority of binomial crossover is further strengthened by the adaptive strategy.   Thus, a problem-specific  adaptive strategy is helpful for  improving the performance of EAs.

Our future work will focus on a further study  for adaptive setting of crossover rate in population-based EAs on more complex problems, as well as development of adaptive EAs improved by introduction of  binomial crossover.

\section*{Acknowledgements}
This research was supported  by the Fundamental Research Funds for the Central
Universities (WUT: 2020IB006).


\bibliography{mybibfile}

\end{document}